\documentclass{article}

\usepackage[accepted]{icml2026}

\usepackage{times}

\usepackage[pagebackref=false,breaklinks=true,colorlinks,bookmarks=false]{hyperref}
\hypersetup{linkcolor=[rgb]{0.7,0.1,0.1}}
\hypersetup{citecolor=[rgb]{0.4,0.15,0.95}}

\usepackage{microtype}
\usepackage{graphicx}
\usepackage{subfigure}
\usepackage{booktabs} % for professional tables
\usepackage{algorithm}
\usepackage{algpseudocode}

\usepackage{nicematrix}

\usepackage{amsmath}
\usepackage{amssymb}
\usepackage{mathtools}
\usepackage{amsthm}
\usepackage{xspace}

\usepackage[capitalize,noabbrev]{cleveref}

\usepackage[inline, shortlabels]{enumitem}
\usepackage{multirow}

\usepackage{amsthm}
\usepackage{amsmath,amsfonts,bm}
\usepackage{nicefrac}
\usepackage{array}
\definecolor{bluex}{rgb}{0.27, 0.42, 0.81}
\definecolor{redx}{HTML}{5384ed}% 5384ed
\definecolor{Gray}{gray}{0.9}
\definecolor{Gray2}{gray}{0.7}
\definecolor{blue1}{HTML}{508AB2}
\definecolor{blue2}{HTML}{dde8fa}
\definecolor{green1}{HTML}{A1D0C7}
\definecolor{green2}{HTML}{BFF6BA}
\definecolor{green3}{HTML}{028100}
\definecolor{linkcolorx}{rgb}{0.7,0.1,0.1}
\usepackage{pifont} 
\newcommand{\cmark}{\ding{51}}
\newcommand{\xmark}{\ding{55}}

\theoremstyle{plain}
\newtheorem{theorem}{Theorem}[section]
\newtheorem{proposition}[theorem]{Proposition}

\theoremstyle{definition}

\theoremstyle{remark}

\usepackage[listings]{tcolorbox}
\tcbuselibrary{listings,theorems}
\newtcbtheorem[crefname={Example}{Examples}]{example}{Example}{colback=green2!5,colframe=blue1,fonttitle=\bfseries, left=.02in, right=.02in,bottom=.02in, top=.02in,}{example}

\DeclareMathOperator*{\argmax}{arg\,max}

\DeclareMathOperator{\diag}{Diag}

\newcommand{\bR}{\mathbb{R}}

\newcommand{\bP}{\mathbb{P}}

\newcommand{\FFN}{\text{FFN}}
\newcommand{\softmax}{\text{softmax}}

\newcommand{\kernel}{\kappa}

\newcommand{\hC}{\mathcal{C}}
\newcommand{\hD}{\mathcal{D}}

\newcommand{\hM}{\mathcal{M}}

\newcommand{\hO}{\mathcal{O}}

\newcommand{\hS}{\mathcal{S}}

\newcommand{\ve}{{\bf e}}

\newcommand{\vk}{{\bf k}}

\newcommand{\vr}{{\bf r}}

\newcommand{\vx}{{\bf x}}
\newcommand{\vy}{{\bf y}}
\newcommand{\vz}{{\bf z}}

\newcommand{\vI}{{\bf I}}

\newcommand{\vL}{{\bf L}}

\newcommand{\vP}{{\bf P}}

\usepackage{wrapfig}
\newcommand{\methodName}{MetaMoE\xspace}

\begin{document}

\twocolumn[
  \icmltitle{\methodName: Diversity-Aware Proxy Selection for \\ Privacy-Preserving Mixture-of-Experts Unification}
  \icmlsetsymbol{equal}{*}

  \begin{icmlauthorlist}
	\icmlauthor{Weisen Jiang}{cuhk}
	\icmlauthor{Shuhao Chen}{hkust,sustech}
	\icmlauthor{Sinno Jialin Pan}{cuhk}
\end{icmlauthorlist}

\icmlaffiliation{cuhk}{Department of Computer Science and Engineering, The Chinese University of Hong Kong}
\icmlaffiliation{sustech}{Department of Computer Science and Engineering, Southern University of Science and Technology}
\icmlaffiliation{hkust}{Department of Computer Science and Engineering, Hong Kong University of Science and Technology}

\icmlcorrespondingauthor{Weisen Jiang}{waysonkong@gmail.com}
\icmlkeywords{Mixture-of-Experts, Privacy-Preserving Learning, Model Merging}

\vskip 0.3in
]

\printAffiliationsAndNotice{}

\begin{abstract}
	Mixture-of-Experts (MoE) models scale capacity by combining specialized experts, but most existing approaches assume centralized access to training data. In practice, data are distributed across clients and cannot be shared due to privacy constraints, making unified MoE training challenging.
    We propose \textbf{\methodName}, a privacy-preserving framework that unifies independently trained, domain-specialized experts into a single MoE using public proxy data as surrogates for inaccessible private data. 
    Central to \methodName is diversity-aware proxy selection, which selects client-domain–relevant and diverse samples from public data to effectively approximate private data distributions and supervise router learning.
    These proxies are further used to align expert training, improving expert coordination at unification time, while a context-aware router enhances expert selection across heterogeneous inputs.
    Experiments on computer vision and natural language processing benchmarks demonstrate that \methodName consistently outperforms recent privacy-preserving MoE unification methods. Code is available at \url{https://github.com/ws-jiang/MetaMoE}.
\end{abstract}

\section{Introduction}
\label{sec:introduction}

Large foundation models~\citep{llama3-1, llama3-2, qwen24} have become indispensable across domains such as computer vision (CV)~\citep{radford2021clip, wei2024gita} and natural language processing (NLP)~\citep{jiang2023effective, jiang2024forward, chen2024routerdc, lin2025parm, jiang2026metadefense}. 
In practice, organizations and users often finetune a shared seed model on their own private data, resulting in a collection of specialized experts. 
While these experts could in principle be unified by aggregating private data to train a single Mixture-of-Experts (MoE) model~\citep{jacobs1991adaptive, fedus2022switch},
data sharing is often infeasible due to confidentiality, regulatory, and ethical constraints. 
This raises a fundamental question (called Privacy-Preserving Mixture-of-Experts Unification): \emph{How can we unify independently trained experts into one deployable MoE model while strictly preserving data privacy?} 
Although federated learning~\citep{li2020federated, kairouz2021, zhang2021survey} enables collaborative training without data sharing, it requires costly synchronized optimization across many clients and often suffers from performance degradation under heterogeneous client data distributions~\cite{wei2020federated}. 

Several approaches have been proposed to unify independently finetuned experts. 
BTM~\citep{li2022branch} ensembles expert predictions, allowing embarrassingly parallel training but failing to produce a single deployable model for downstream fine-tuning or RLHF~\citep{ouyang2022training}. 
Model averaging methods such as Model Soup~\citep{wortsman2022model} merge parameters directly, which is computationally efficient but fragile when experts are diverse. 
BTX~\citep{sukhbaatar2024branch} extends this line by transplanting expert feed-forward network (FFN) sublayers into a shared MoE architecture with a learned router. However, its requirement for client-specific data to train this router limits its use in privacy-sensitive scenarios. 

When private data cannot be shared, public data can be used as \emph{proxies} to approximate unavailable private distributions and provide supervision for router learning.
Most recently, FlexOlmo~\citep{shi2025flexolmo} adopts this strategy by training a router on proxy data while anchoring experts to a public model. However, its reliance on similarity-based proxy selection often produces redundant and narrowly concentrated proxies, limiting coverage of domain-relevant modes and weakening router supervision. Moreover, because experts are trained exclusively on private data and never exposed to proxies, they remain domain-isolated, further exacerbating the mismatch between expert behavior and proxy-based routing. These limitations motivate the need for a more principled proxy selection and alignment strategy for expert unification under privacy constraints.

We propose \textbf{\methodName}, a privacy-preserving framework for unifying independently trained experts into a single MoE model via \emph{diversity-aware proxy selection}.
Specifically, \methodName selects proxy samples using a relevance-weighted determinantal point process (DPP), an extension of standard DPPs~\citep{macchi1975coincidence,kulesza2012determinantal} that augments diversity with client-specific relevance scores, ensuring that proxy samples are both diverse and representative of each client domain.
Each client then performs proxy-aligned expert training by finetuning FFN sublayers on private data alongside proxy samples.
Because the same proxy data are later used for router training, this exposure aligns expert behavior with the supervision available at unification time, mitigating domain isolation and enabling effective coordination across experts.
A context-aware router that incorporates both token-level and sequence-level context further improves expert assignment, after which experts’ FFN sublayers are merged into MoE layers and jointly finetuned on the union of proxy data.
Experiments on both CV and NLP benchmarks demonstrate that \methodName consistently outperforms recent state-of-the-art baselines.

Our contributions are three-fold:
\begin{enumerate*}[(i), series = tobecont, itemjoin = \quad]
\item We study privacy-preserving MoE unification and propose \methodName,
a framework that unifies independently finetuned experts without sharing
private data, and provide formal privacy
guarantees.
\item We propose \emph{diversity-aware proxy selection} via a relevance-weighted DPP, addressing the limitations of similarity-only proxy sampling for router learning.
\item We design a proxy-aligned expert training strategy and a context-aware router, and demonstrate through extensive CV and NLP experiments that \methodName achieves superior performance over recent methods.
\end{enumerate*}

\section{Related Works}
\label{sec:related}

\subsection{Federated Learning} \label{subsec:related:federated}

Federated Learning (FL)~\citep{li2020federated, kairouz2021, zhang2021survey} enables collaborative training without centralizing raw data. 
Classical methods such as FedAvg~\citep{mcmahan2017communication} aggregate local updates, with extensions for parameter-efficient tuning~\citep{hu2022lora} or differential privacy~\citep{dwork2014algorithmic}. 
Yet scaling FL to large models~\citep{llama3-1, llama3-2, radford2021clip} is difficult due to costly synchronization, degraded generalization~\citep{zhang2023trading}, and privacy leakage from gradients~\citep{wang2020attack, darzi2024exploring}. 
In contrast, \methodName trains experts independently and asynchronously, then merges them via proxy-data-driven routing, avoiding FL’s communication bottlenecks while preserving privacy. 

\subsection{Model Merging and Mixture-of-Experts} \label{subsec:related:merging}

\textbf{Model Merging}~\citep{yang2024model, wortsman2022model, ilharco2023editing, yadav2023ties,rame2023rewarded, li2024learning} explores how to combine multiple independently trained models into a single, stronger model without requiring costly joint training. 
{Branch-Train-Merge (BTM)}~\citep{li2022branch} trains domain experts independently and ensembles outputs at inference time, but does not yield a single unified model (hindering downstream SFT/RLHF~\citep{ouyang2022training} and incurring inference overhead). 
{Model Soup}~\citep{wortsman2022model} shows that averaging the weights of finetuned models often improves both accuracy and robustness with no additional inference cost, yet it is fragile when experts diverge in function space, leading to degraded performance in heterogeneous settings. 
{Branch-Train-MiX (BTX)}~\citep{sukhbaatar2024branch} inserts experts into MoE layers and learns a post-hoc router via additional fine-tuning on private data. 
More recently,
{FlexOlmo}~\citep{shi2025flexolmo} advances this direction in federated settings by anchoring
experts to a shared public model and aligning them with router embeddings, where proxy samples are
selected for each client {based solely on similarity}.
In contrast, our \methodName leverages a relevance-and-diversity criterion via a relevance-weighted DPP to select proxies, and employs a context-aware router, thereby combining heterogeneous experts into a unified MoE model under privacy constraints.

The \textbf{Mixture-of-Experts (MoE)}~\citep{jacobs1991adaptive} framework enhances model flexibility by combining specialized experts via a gating mechanism. Modern MoE architectures~\citep{riquelme2021scaling, fedus2022switch} scale Transformers by activating only a sparse subset of experts per token, expanding capacity without proportional compute cost~\citep{shazeer2017outrageously}. Key designs include the Switch Transformer~\citep{fedus2022switch} with top-1 routing and variants~\citep{shazeer2017outrageously, riquelme2021scaling, roller2021hash, dai2024deepseekmoe} exploring top-$k$ gating, stochastic routing, and random assignment for better accuracy, efficiency, and load balance. 
While MoE is effective for scaling, these methods rely on centralized access to all training data and synchronized training. 
To overcome these limits under privacy constraints, we propose \methodName,
which introduces proxy-data-driven routers and decentralized expert training.

\subsection{Determinantal Point Processes (DPPs)}
\label{subsec:related:dpp}

\begin{figure*}[!t]
\centering
\includegraphics[width=.95\textwidth]{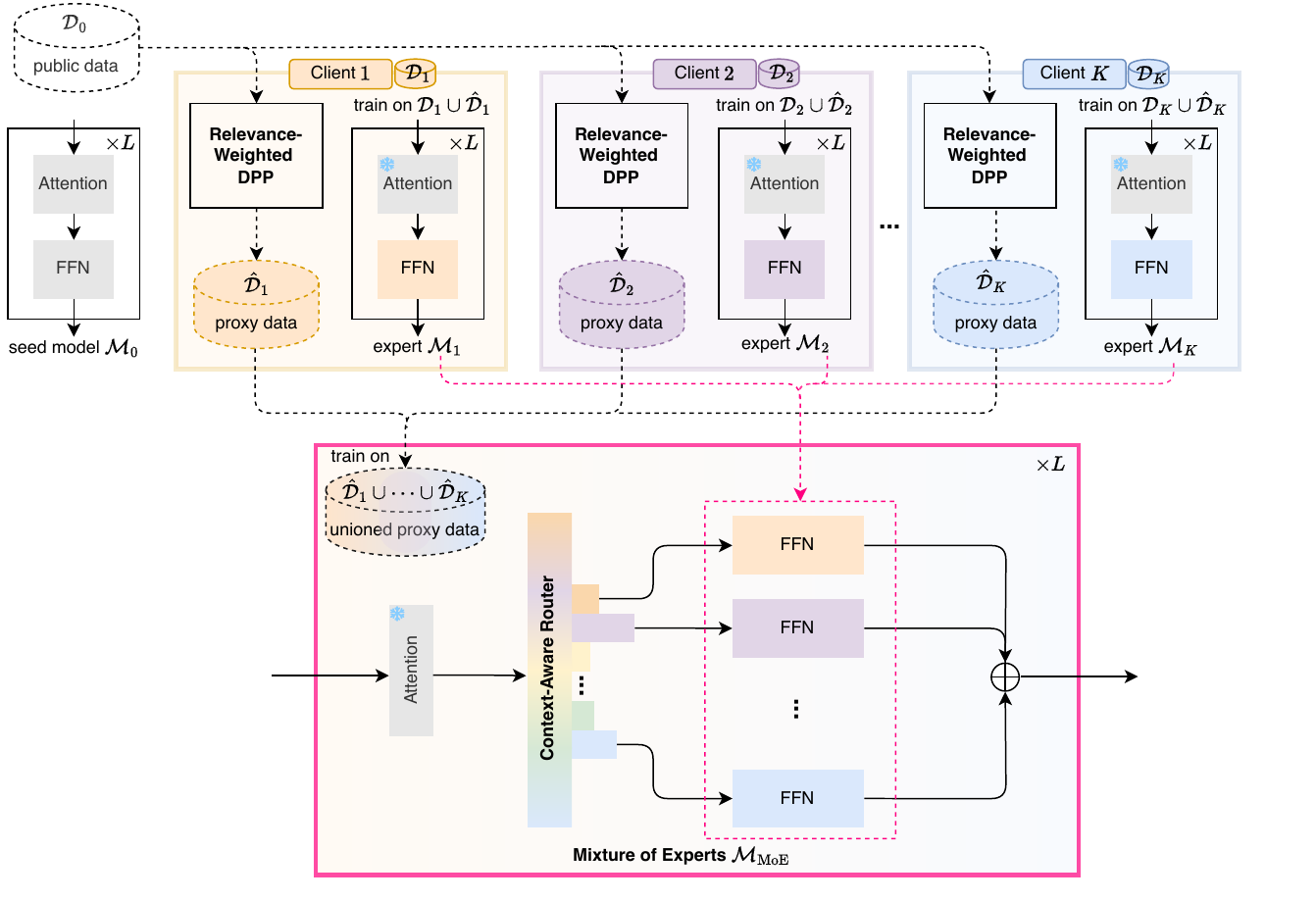}
\vskip -.05in
\caption{Illustration of \methodName.} 
\label{fig:method}
\vskip -.1in
\end{figure*}

Determinantal Point Processes (DPPs)~\citep{macchi1975coincidence,kulesza2012determinantal,lavancier2015determinantal} are probabilistic models designed to capture \emph{negative interactions} among items, which are useful for diverse subset selection. 
Formally, consider a discrete set 
\(
\mathcal{Z} = \{1,2,\dots,N\},
\) 
where each element corresponds to an item with feature vector $\vx_i$. 
A DPP defines a probability distribution over all $2^N$ subsets of $\mathcal{Z}$. 
To specify the distribution, we construct a positive semi-definite kernel matrix 
\(
\vL \in \mathbb{R}^{N \times N}, \quad \vL_{ij} = \kernel(\vx_i, \vx_j),
\)
where $\kernel(\cdot,\cdot)$ is a kernel function encoding similarity.
The probability of sampling a subset $\hS \subseteq \mathcal{Z}$ is:
\begin{equation}
    \bP(\hS) = \frac{\det(\vL_{\hS})}{\det(\vL + \vI)}, 
    \label{eq:dpp}
\end{equation}
where $\vL_{\mathcal{S}}$ is the submatrix indexed by $\mathcal{S}$, $\det(\cdot)$ is the determinant, and $\vI$ is the identity matrix.
The denominator $\det(\vL + \vI)$ is constant with respect to the choice of $\hS$, 
so for subset selection it can be ignored; maximizing the selection probability $\bP(\hS)$ is thus equivalent to maximizing $\det(\vL_{\hS})$.

By reproducing kernel Hilbert space representation~\citep{scholkopf2001generalized}, one may write $\kernel(\vx_i, \vx_j) = \phi(\vx_i)^\top \phi(\vx_j)$ for some feature map $\phi(\cdot)$. 
Then $\det(\vL_{\hS})$ equals the squared volume of a parallelotope spanned by $\{\phi(\vx_i) \mid i \in \hS\}$, so similar items are less likely to be selected together, promoting diversity.

%%%%%%%%%%%%%%%%%%%%%%%%%%%%
%  Method
%%%%%%%%%%%%%%%%%%%%%%%%%%%%

\section{Methodology}
\label{sec:method}

\subsection{Problem Formulation}
\label{sec:method:problem}

Denote by $\hM_0$ a seed model and by $\hD_0$ a publicly available dataset. 
We consider $K$ clients, where each client $p$ owns a private dataset $\hD_p$ from its local domain. 
Sharing $\{\hD_p\}_{p=1}^K$ directly is prohibited due to privacy constraints. 
Each client adapts the seed model locally to obtain a domain-specialized expert $\hM_p$. 
Our objective is to unify these experts $\{\hM_p\}_{p=1}^K$ into a Mixture-of-Experts (MoE) model $\hM_{\text{MoE}}$ that can be deployed back to all clients such that each client can achieve multi-domain capabilities. 
In the MoE, experts encode domain-specific knowledge, while the router coordinates the experts to enable effective collaboration.

\emph{The core challenge lies in training the router.} 
Conventional MoE training~\citep{jacobs1991adaptive, shazeer2017outrageously} assumes centralized access to all client data, but in our setting only the public dataset $\hD_0$ is globally accessible. 
Thus, the router must be learned without directly observing $\{\hD_p\}_{p=1}^K$, while still generalizing across all clients' domains. 
We address this challenge by selecting proxy samples from $\hD_0$ to approximate each $\hD_p$, enabling the router to coordinate domain-specific experts in a privacy-preserving manner.
\autoref{fig:method} illustrates our proposed \methodName, consisting of three stages (proxy data selection, proxy-aligned expert training, and context-aware router training) and will be detailed in the following sections.

\subsection{Proxy Data Selection via Relevance-Weighted DPP}
\label{sec:proxy-selection}

\begin{figure*}[!h]
	\centering
	% \vskip -.15in
	\!\!\!
	\subfigure[Random.\label{fig:random-selection}]{\includegraphics[width=0.32\textwidth]{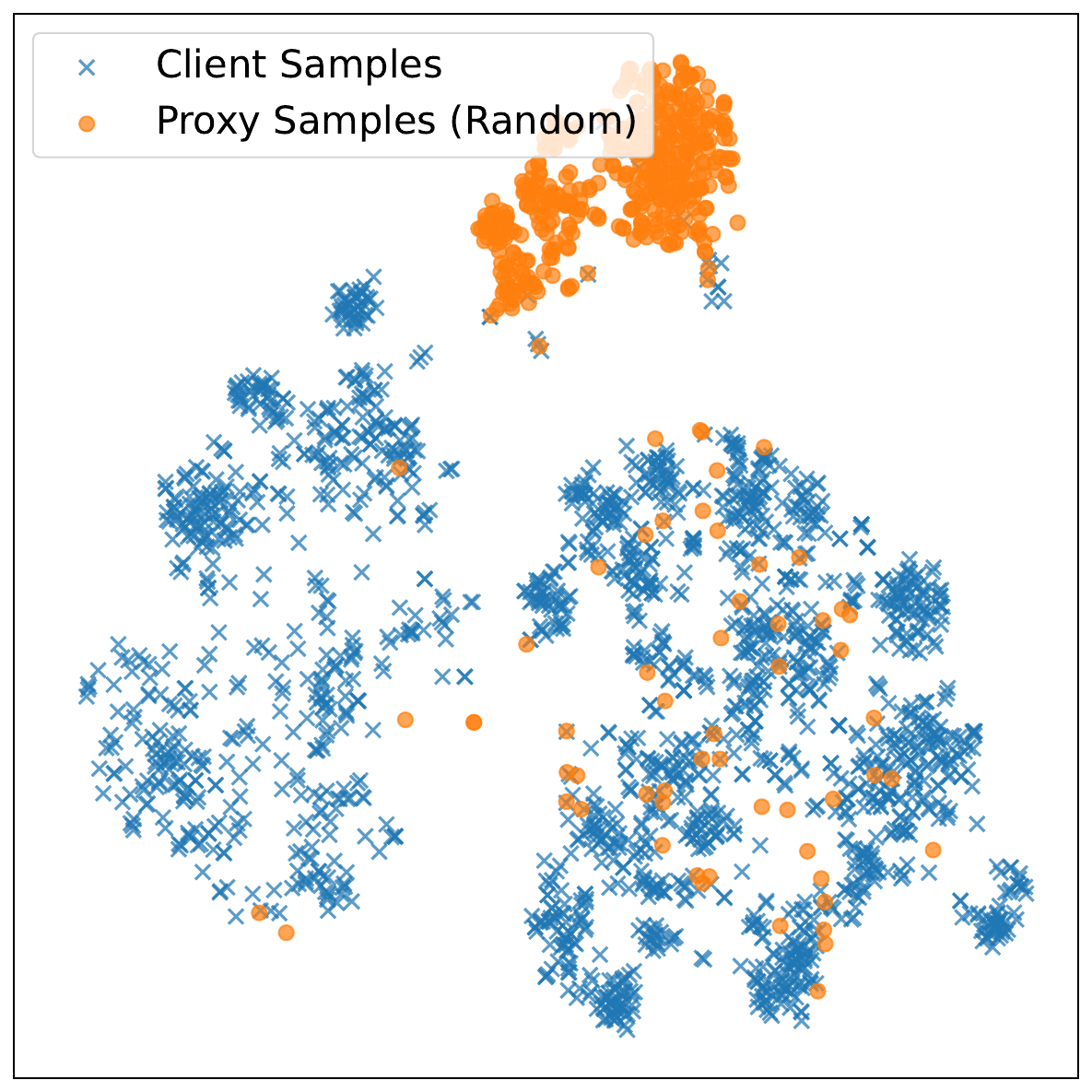}} 
	\subfigure[FlexOlmo.\label{fig:flexolmo-selection}]{\includegraphics[width=0.32\textwidth]{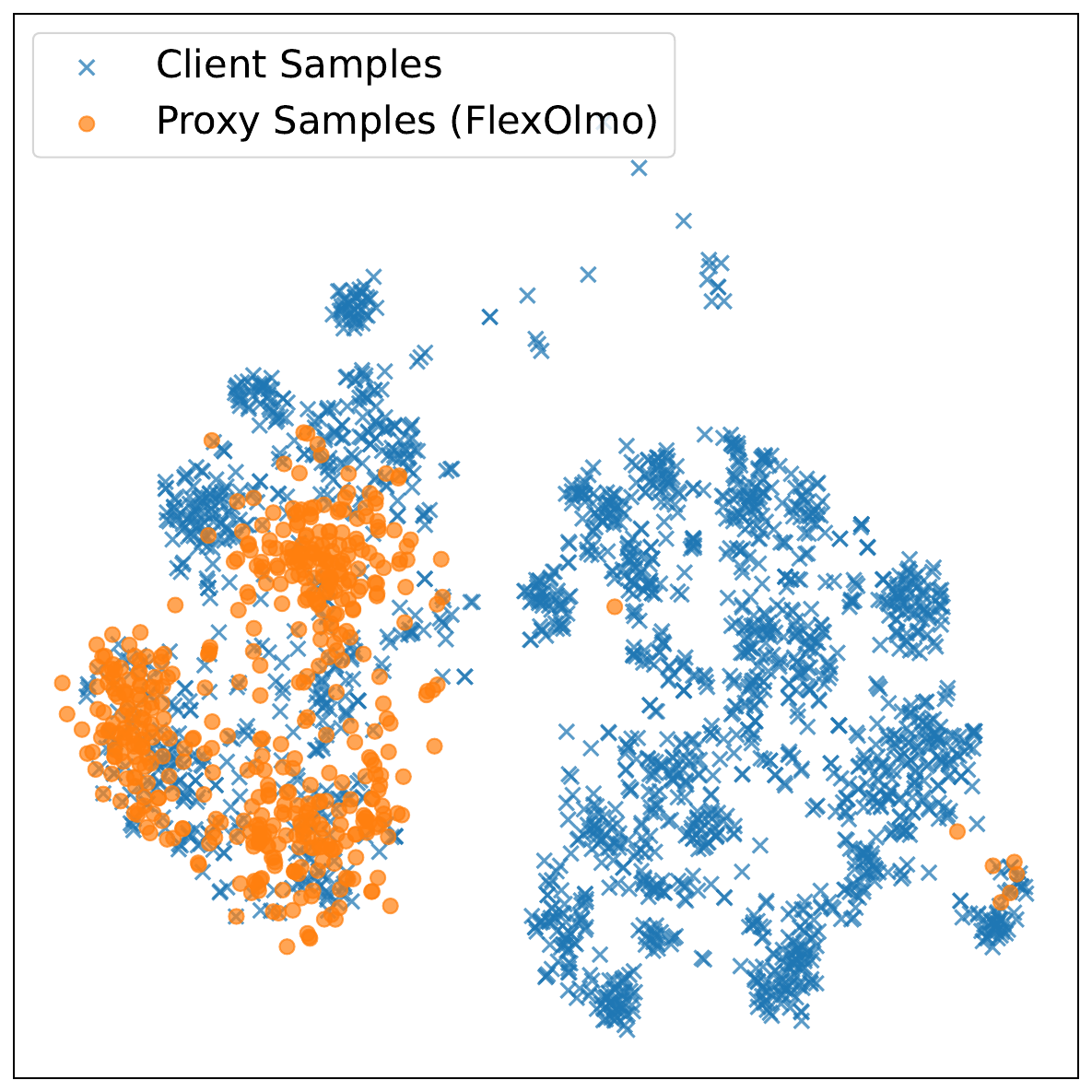}}
	\subfigure[\methodName (Ours).\label{fig:our-selection}]{\includegraphics[width=0.32\textwidth]{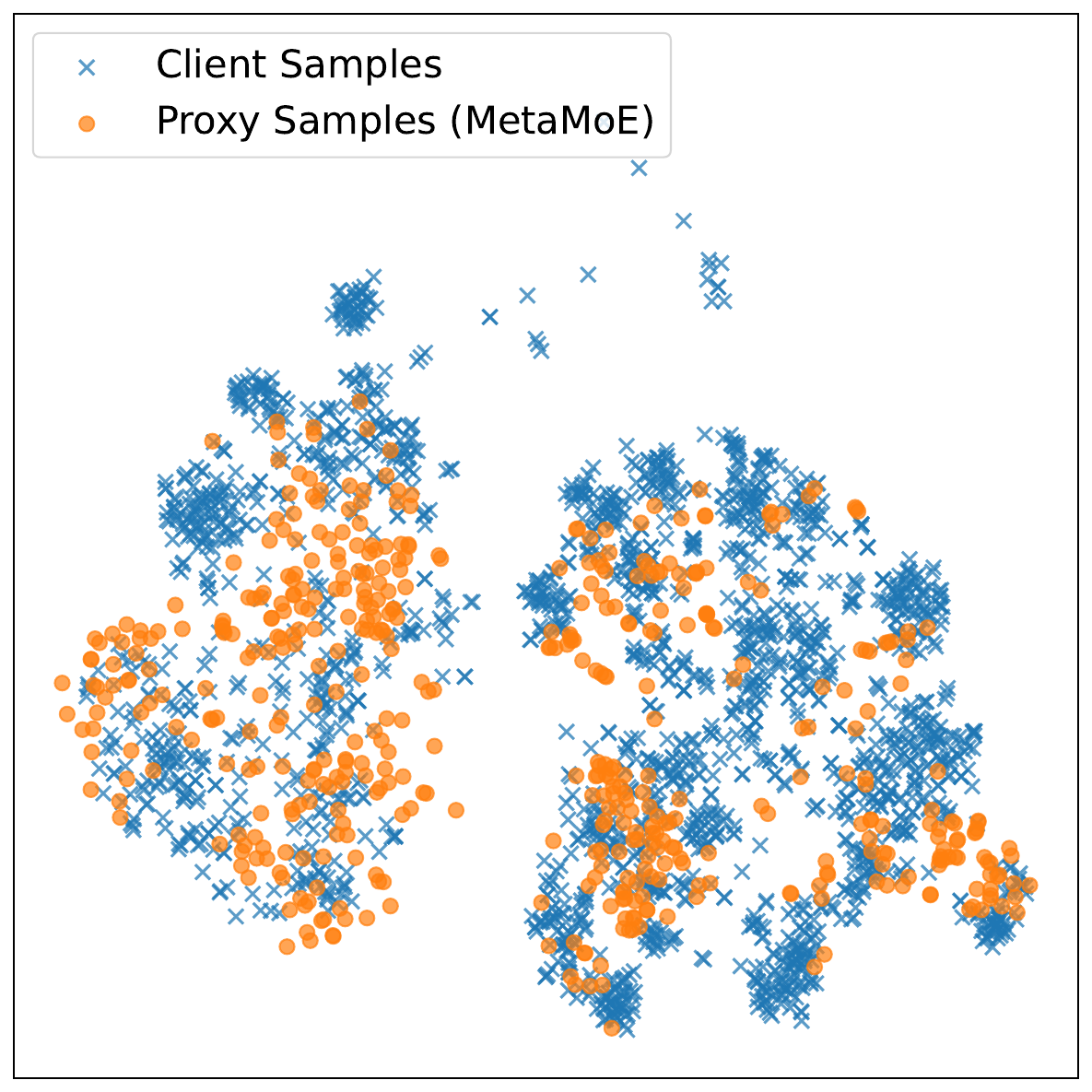}}\!\!\!
    \vskip -.05in
	\caption{t-SNE visualization of selected proxy samples with random selection, FlexOlmo selection (relevance only), and our selection (relevance + diversity) for Pets with ViT-B/32 as the seed model. As can be seen, our selection yields a more diverse and representative proxy dataset that covers the private-data manifold more effectively (see \autoref{sec:dpp-usefulness} for further analysis).}
	\label{fig:proxy-selection-comparison}
	\vskip -.15in
\end{figure*}

Since client private data is inaccessible for router training, we construct a proxy dataset $\hat{\hD}_p$ for each client $p$ from the public dataset $\hD_0$ to serve as a proxy for $\hD_p$. 
Effective proxy data should satisfy two criteria: they should be both \textbf{relevant} to the private data $\hD_p$ and sufficiently  \textbf{diverse} to avoid redundancy. 
Relevance ensures that the proxy samples resemble the private data so that the router trained on proxies learns domain-appropriate decision boundaries, rather than being distracted by unrelated public samples. 
Diversity, on the other hand, ensures that the selected proxy samples cover different regions of the private-data manifold, rather than clustering around a narrow region of highly similar samples, thereby providing broader coverage and improving the router’s generalization. 

Determinantal point processes (DPPs)~\citep{macchi1975coincidence,kulesza2012determinantal,lavancier2015determinantal} naturally enforce diversity, but a vanilla DPP (\autoref{subsec:related:dpp}) ignores whether the chosen samples align with the client’s domain $\hD_p$. 
Hence, naively applying a vanilla DPP may 
select a diverse yet irrelevant proxy dataset.
To overcome this, we propose a \emph{relevance-weighted DPP}, which augments the kernel with client-specific relevance scores:
\begin{align}
    \tilde{\kernel}(\vx_i,\vx_j) 
    = g(\vx_i,\hD_p)\,\kernel(\vx_i,\vx_j)\,g(\vx_j,\hD_p), 
    \label{eq:relevance-weighted-kernel}
\end{align}
where $\kernel(\vx_i,\vx_j)$ measures similarity between public samples (e.g., cosine similarity), 
and $g(\vx_i,\hD_p)$ quantifies the relevance of $\vx_i$ to $\hD_p$ (e.g., via a classifier distinguishing $\hD_0$ from $\hD_p$, see \autoref{appendix:relevance}). 
This yields the \emph{relevance-weighted kernel matrix}
\begin{align}
    \widetilde{\vL} = \diag(\vr)\,\vL\,\diag(\vr), 
    \label{eq:relevance-weighted-kernel-matrix}
\end{align}
where $\vL_{ij}=\kernel(\vx_i,\vx_j)$, $\vr=[g(\vx_1,\hD_p),\dots,g(\vx_N,\hD_p)]$, and $\diag(\vr)$ denotes the diagonal matrix with $\vr$ on the diagonal. 
According to \eqref{eq:dpp} and \eqref{eq:relevance-weighted-kernel-matrix},
the unnormalized log-probability of selecting a subset $\hS$ under relevance-weighted DPP is given by
\begin{align} \textstyle
    \log \det(\widetilde{\vL}_{\hS}) 
    = 2\sum_{i\in\hS} \log \vr_i \;+\; \log\det(\vL_{\hS}),
\end{align}
where the first term $2\sum_{i\in\hS} \log \vr_i$ encourages relevance by favoring samples closer to the client's data $\hD_p$, 
while the second term $\log\det(\vL_{\hS})$ is the standard DPP repulsion term that enforces diversity.

The proxy dataset $\hat{\hD}_p$ is selected by
\begin{align}
    \hat{\hD}_p = \argmax_{\hS \subseteq \mathcal{Z}, |\hS| = m} \log \det(\widetilde{\vL}_{\hS}), \label{eq:proxy-selection}
\end{align}
which yields a \emph{relevance-weighted and diverse} cover of the private-data manifold.
In contrast to FlexOlmo’s relevance-only proxy selection~\citep{shi2025flexolmo}, which often collapses onto redundant samples and provides a narrow view of a client’s domain (see \autoref{fig:flexolmo-selection}), the relevance-weighted DPP explicitly balances relevance and diversity.
The diversity term discourages near-duplicate proxies, providing the router with a richer supervision signal that better spans the private-domain manifold (see \autoref{fig:our-selection}) and enables more effective expert coordination.
Because all proxies are selected from a public dataset, choosing client-specific proxy subsets does not violate privacy constraints (see \autoref{sec:privacy-analysis} and Appendix~\ref{sec:appendix:privacy}).

Since exact maximum a posteriori (MAP) inference in \eqref{eq:proxy-selection} is NP-hard, we adopt efficient greedy algorithms with approximation guarantees~\citep{kulesza2012determinantal,gillenwater2012near,han2017faster}. 
We first restrict a candidate pool (the top-$n$ public samples ranked by $g(\vx_i,\hD_p)$), and then perform greedy MAP inference to construct $\hat{\hD}_p$ by iteratively adding the sample that maximizes the marginal gain in $\log\det(\widetilde{\vL}_{\hat{\hD}_p})$. 
Using Cholesky updates~\citep{Horn_Johnson_1985}, the computational cost is reduced from $\hO(nm^3)$ to $\hO(nm)$ (see \autoref{appendix:cholesky}), where $n$ is the candidate pool size and $m$ is the target proxy set size.

\subsection{Proxy-Aligned Expert Training}
\label{sec:expert-training}

For each client $p$, an expert is initialized by branching from the shared seed model $\hM_0$. Only the feed-forward network (FFN) sublayers are finetuned using a combination of private data $\hD_p$ and client-specific proxy data $\hat{\hD}_p$, while all other parameters remain frozen.

Training experts solely on private data, as in \citep{shi2025flexolmo}, produces highly specialized models but leads to \emph{domain isolation}: since the router is trained later without access to private data, it struggles to coordinate experts adapted to heterogeneous domains. 

We address this mismatch by incorporating proxy data during expert training. Since router learning relies exclusively on proxy data, exposing each expert to its corresponding proxies calibrates expert representations to the same data distribution used for routing. Specifically, the router is trained on the union $\hat{\hD}_1 \cup \cdots \cup \hat{\hD}_K$; aligning expert training with this supervision improves routing compatibility while preserving data privacy. Compared with FlexOlmo, which trains experts only on private data, proxy-aligned training in \methodName preserves domain-specific expertise while enabling more effective expert coordination.

\subsection{Context-Aware Router for Expert Unification} \label{sec:context-aware-router}

After collecting domain-specific experts $\{\hM_p\}_{p=1}^K$, we merge their FFN sublayers into MoE modules at each Transformer layer.  
We denote $\FFN^{(l)}_p(\cdot)$ as the $l$-th FFN sublayer of the $p$-th expert $\hM_p$ and 
let $\vz_t^{(l)}$ be the $t$-th token representation of input sequence $\vx$ in the $l$-th layer.  
The corresponding MoE module is then formulated as
\begin{align}
    \hM_{\text{MoE}}^{(l)}(\vz_t^{(l)}) 
    =\!\!\!\!\!\!  \sum_{p \in \text{Top-}k(\pi^{(l)}(\vz_t^{(l)}))}  
    \!\!\!\!\!\![\pi^{(l)}(\vz_t^{(l)})]_p \cdot \FFN^{(l)}_p(\vz_t^{(l)}),  \!\!
\end{align}
where $\pi^{(l)}(\vz_t)$ is the router’s score distribution over experts, and $\text{Top-}k(\cdot)$ is the top-$k$ selection.

Conventional routers rely solely on $\vz_t^{(l)}$, making routing decisions based on token-level features.  
However, this can be unreliable: tokens with similar surface forms may belong to different domains and require different experts.
%  (e.g., the word ``bank'' in financial versus geographical contexts).  
Such routing collisions are particularly problematic here, since the router is trained only on proxies and never directly observes the true client data distributions.  

To mitigate this, we introduce a context-aware router.  
Instead of routing purely from $\vz_t^{(l)}$, we form a blended representation
\(
    \tilde{\vz}_t^{(l)} = (1-\lambda)\,\vz_t^{(l)} + \lambda\, \vz_\vx^{(l)},
\)
where $\vz_\vx^{(l)}=\frac{1}{T}\sum_{t=1}^T \vz_t^{(l)}$ is a sequence-level embedding capturing global context ($T$ is the length of $\vx$), and $\lambda \in [0,1]$ is a learnable weight.  
This blending balances token semantics with broader context cues, and routing distribution is computed as
\begin{equation}
    \pi^{(l)}(\vz_t^{(l)}) = \softmax [
    \tilde{\vz}_t^{(l)\top} \ve^{(l)}_1,\; \dots,\;
    \tilde{\vz}_t^{(l)\top} \ve^{(l)}_K
	], \label{eq:context-aware-router}
\end{equation}
where $\ve^{(l)}_p$ is the learnable routing vector for expert $p$, initialized as the mean embedding of $\hD_p \cup \hat{\hD}_p$, i.e.,
\(
    \ve^{(l)}_p = \tfrac{1}{|\hD_p \cup \hat{\hD}_p|} 
    \sum_{\vx \in \hD_p \cup \hat{\hD}_p} \hM_p^{(1:l)}(\vx),
\)
with $\hM_p^{(1:l)}(\cdot)$ denoting the first $l$ layers of $\hM_p$.
This domain-aware initialization injects each expert’s domain characteristics directly into the router, giving it meaningful expert–token priors.

\subsection{Final MoE Training}
At the final stage, we aggregate all proxy datasets $\hat{\hD}_1\! \cup\! \cdots\! \cup \!\hat{\hD}_K$ and finetune the unified MoE model.  
This process updates the router while jointly adapting the FFN experts under supervision from proxy data, ensuring that experts are not treated as isolated components but instead operate cohesively within an MoE architecture.  
Hence,
the resulting model $\hM_{\text{MoE}}$ unifies domain-specific expertise with a privacy-preserving router, yielding a unified MoE model that generalizes across heterogeneous client domains.  
The overall procedure of \methodName is summarized in \autoref{alg:methodName}.

\begin{algorithm*}[!t]
	\caption{\methodName.}
	\label{alg:methodName}
	\begin{algorithmic}[1]
	\Require public dataset $\hD_0$; client private datasets $\{\hD_p\}_{p=1}^K$; 
	kernel $\kernel(\cdot,\cdot)$; proxy dataset size $m$; optional candidate pool size $n$ ($m \le n \ll |\hD_0|$);
	\For{each client $p=1,2,\dots,K$}
		\State \textcolor{Gray2}{// select proxy samples by relevance-weighted DPP}
		\State Train a binary classifier $g(\vx,\hD_p)$ to distinguish $\hD_0$ vs. $\hD_p$;
		\State Compute relevance $r(\vx)=g(\vx,\hD_p)$ for each $\vx \in \hD_0$;
		\State Let $\hC_p \subseteq \hD_0$ be the top-$n$ elements of $\hD_0$ ordered by $r(\vx)$;
		\State Construct kernel matrix $\vL \in \bR^{n\times n}$ with $\vL_{ij}=\kernel(\vx_i,\vx_j)$ for each $\vx_i,\vx_j \in \hC_p$;
		\State Form $\vr=[\,r(\vx)\,]_{\vx\in \hC_p}$ and relevance-weighted kernel matrix $\widetilde{\vL}=\text{Diag}(\vr)\,\vL\,\text{Diag}(\vr)$;
		\State Greedily build $\hat{\hD}_p$ of size $m$ by iteratively adding {\footnotesize $\displaystyle\vx^\star\!\!=\!\arg\!\!\!\!\max_{\vx\in \hC_p\setminus \hat{\hD}_p} \!\!\! {\scriptsize\log\det}(\tilde{\vL}_{\tiny \hat{\hD}_p\cup\{\vx\}}\!)-{\scriptsize\log\det}(\tilde{\vL}_{\tiny \hat{\hD}_p})$};\!\!\!\!\!\!
		\State \textcolor{Gray2}{// proxy-aligned expert training}
		\State Finetune the FFN sublayers of the $p$-th client model on $\hat{\hD}_p \cup \hD_p$, and freeze all other sublayers;
		\State For each layer $l$, compute router vector $
					\ve_p^{(l)} = \frac{1}{|\hat{\hD}_p \cup \hD_p|} \sum_{\vx \in \hat{\hD}_p \cup \hD_p} \hM_p^{(1:l)}(\vx);
				$
	\EndFor
	\State \textcolor{Gray2}{// build the MoE model with context-aware router}
	\State Merge all clients' FFN sublayers into a single MoE model $\hM_{\text{MoE}}$;
	\State Collect $\{\ve_p^{(l)}\}_{p=1}^K$ for all layers $l=1,\dots,L$ to initialize the router;
	\State Compute the routing distribution $\pi^{(l)}(\vz_t^{(l)})$ for each layer $l$ using \eqref{eq:context-aware-router};
	\State Finetune $\hM_{\text{MoE}}$ on the union of proxy datasets $\hat{\hD}_1 \cup \cdots \cup \hat{\hD}_K$;
	\State \Return Trained MoE model $\hM_{\text{MoE}}$.
	\end{algorithmic}
	\end{algorithm*}

\subsection{Privacy Analysis}
\label{sec:privacy-analysis}

We specify the threat model and provide formal privacy guarantees for the artifacts communicated by \methodName.

Throughout the \methodName pipeline, each client $p$ communicates only the following artifacts to the central server:
(i)~indices of selected public proxy samples (a subset of $\hD_0$),
(ii)~final expert weights (FFN sublayers of $\hM_p$), and
(iii)~routing vectors $\ve_p^{(l)}$ (mean embeddings used to initialize the router, \eqref{eq:context-aware-router}).
Raw private samples, gradients, and intermediate activations are never shared.
This privacy model, commonly referred to as \emph{data residency}
~\citep{mcmahan2017communication}, 
is standard in privacy-preserving collaborative learning.
Notably, \methodName is strictly more conservative than federated learning (FL)~\citep{li2020federated, kairouz2021, zhang2021survey}: it transmits only final expert weights in a single one-shot communication, whereas FL iteratively exchanges gradient updates that are susceptible to model inversion attacks
~\citep{wang2020attack}.

Among the three communicated artifacts, proxy indices are deterministic functions of public data and carry no private information.
Expert weights, as outputs of deep network training, are shared in the same manner as in FL.
The routing vectors $\ve_p^{(l)}$, while computed over both private and proxy data, also carry negligible private information, as we formally establish below.

We analyze the routing vector for a fixed client $p$ and layer $l$.
Let $N = |\hD_p|$ and $m = |\hat{\hD}_p|$ denote the private and proxy dataset sizes, respectively, and let $f(\cdot) = \hM_p^{(1:l)}(\cdot)$ denote the first $l$ layers of $\hM_p$ used as the encoder.
Define the mean private embedding $\boldsymbol{\mu}_{\text{priv}} = \frac{1}{N}\sum_{\vx \in \hD_p} f(\vx)$
and the mean proxy embedding $\boldsymbol{\mu}_{\text{proxy}} = \frac{1}{m}\sum_{\vx \in \hat{\hD}_p} f(\vx)$.
The routing vector \eqref{eq:context-aware-router} can be decomposed as
\begin{align}
    \ve_p^{(l)}
    = \frac{N}{N\!+\!m}\,\boldsymbol{\mu}_{\text{priv}} + \frac{m}{N\!+\!m}\,\boldsymbol{\mu}_{\text{proxy}}.
    \label{eq:routing-decomposition}
\end{align}
An adversary observes $\ve_p^{(l)}$ and knows $\boldsymbol{\mu}_{\text{proxy}}$, $m$, and $f$ (all derived from public information), but \textbf{does not know $N$} (the private dataset size is never communicated).
We establish three formal guarantees (full derivations in Appendix~\ref{sec:appendix:routing-privacy}):

\emph{(i)~Per-sample sensitivity is bounded by $O(1/m)$.}
Let $B = \max_{\vx}\|f(\vx)\|_2$ be the embedding norm bound.
Replacing any single private sample $\vx_k \in \hD_p$ with an arbitrary $\vx_k'$ yields a perturbed routing vector $\ve_p'^{(l)}$ satisfying
\begin{align}
    \!\!\|\ve_p^{(l)} \!-\! \ve_p'^{(l)}\|_2
    \!=\! \frac{\|f(\vx_k') \!- \!f(\vx_k)\|_2}{N+m}
	\! \leq \!\frac{2B}{N+m}
	\! \leq \!\frac{2B}{m},\!
    \label{eq:sensitivity-bound}
\end{align}
where the last inequality uses $N + m \geq m$.
Thus the proxy set size $m$ alone provides a universal sensitivity ceiling, independent of both $N$ and the domain gap.

\emph{(ii)~The mean private embedding $\boldsymbol{\mu}_{\textup{priv}}$ is unrecoverable.}
From \eqref{eq:routing-decomposition}, isolating $\boldsymbol{\mu}_{\text{priv}}$ requires computing $\boldsymbol{\mu}_{\text{priv}} = \bigl((N+m)\,\ve_p^{(l)} - m\,\boldsymbol{\mu}_{\text{proxy}}\bigr)/N$, which cannot be evaluated without knowing $N$.
Since $\boldsymbol{\mu}_{\text{priv}}$ is the coarsest possible summary of private data (a single vector), finer-grained distributional properties (e.g., variance, class proportions, individual samples) are a fortiori unrecoverable from $\ve_p^{(l)}$.

\emph{(iii)~\methodName exposes strictly less private information than FlexOlmo.}
FlexOlmo~\citep{shi2025flexolmo} shares per-expert routing embeddings computed as mean embeddings over private data alone (Section~3.3.2 of the FlexOlmo paper), directly exposing $\boldsymbol{\mu}_{\text{priv}}$.
In contrast, \methodName dilutes the private component by averaging over both private and proxy data \eqref{eq:routing-decomposition}, providing a strictly stronger privacy guarantee.

%%%%%%%%%%%%%%%%%%%%%
% experiments
%%%%%%%%%%%%%%%%%%%%%
% \newpage

\section{Experiments} \label{sec:experiments}

\subsection{Experiments on CV Tasks}
\label{sec:cv-experiments}

\textbf{Datasets.} 
We evaluate on three benchmarks from distinct domains: 
\begin{enumerate*}[(i), series = tobecont, itemjoin = \quad]
	\item Pets~\citep{jawahar2012cats}, with 37 cat and dog breeds, 
	\item Flowers~\citep{Nilsback2008}, with 102 flower categories, and 
	\item EuroSAT~\citep{helber2019eurosat}, with 10 land-use classes from satellite imagery. 
\end{enumerate*}
Each dataset serves as a client domain, covering fine-grained categorization, natural object recognition, and remote sensing. 
As the public dataset $\hD_0$, we adopt ImageNet~\citep{deng2009imagenet}, which is a large-scale visual corpus providing 1.28M images across 1,000 categories.

\textbf{Models.} 
We adopt CLIP ViT-B/32 and ViT-B/16~\citep{radford2021clip} as the seed models, which consist of a transformer-based image encoder and a text encoder. 

\textbf{Implementation Details.} 
For each client, we build a proxy dataset by first selecting a candidate pool of $n=3000$ ImageNet samples that are most similar to its private data according to the relevance score $g(\vx,\hD_p)$, then choosing $m=500$ proxy samples via relevance-weighted DPP with the cosine similarity kernel 
\(
\kernel(\vx_i,\vx_j) = \cos(\vz_i, \vz_j)
\),
where $\vz_i$ is $\vx_i$'s embedding extracted from $\hM_0$.
Client models are initialized from the seed model $\hM_0$ and finetuned on both private and proxy data. 
For proxy-aligned expert training, we finetune the FFN sublayers of the visual encoder using LoRA~\citep{hu2022lora} (rank $16$, scaling factor $\alpha=32$) for $10$ epochs with the SGD optimizer (momentum $0.9$, weight decay $0.0001$, learning rate $0.01$, batch size $128$, constant learning rates schedule). 
For router training, we adopt top-1 routing and finetune for $5$ epochs using SGD (lr=0.001).

\textbf{Baselines.} 
We compare our method against various baselines: 
\begin{enumerate*}[(i), series = tobecont, itemjoin = \quad]
    \item ZeroShot directly evaluates the seed model $\hM_0$ without any adaptation, providing a capacity-only lower bound. 
    \item BTM~\citep{li2022branch} ensembles predictions from independently trained experts at inference, but does not produce a unified model for downstream use. 
    \item ModelSoup~\citep{wortsman2022model} merges experts by averaging their weights into a single model, avoiding inference overhead from ensembling but losing specialization when experts diverge. 
    \item BTX~\citep{sukhbaatar2024branch} integrates experts by inserting their FFN sublayers into MoE layers, averaging the remaining parameters, and then fine-tuning the mixed model on public data (since private data are unavailable), following the setup of FlexOlmo~\citep{shi2025flexolmo}. 
    \item FlexOlmo~\citep{shi2025flexolmo} aligns experts without centralizing private data by anchoring them to a shared public model and introducing per-expert router embeddings, which are later finetuned on proxy data selected by similarity to private domains. 
    \item Separate Experts evaluates each independently finetuned expert across all domains. 
    \item UnrestrictedMoE trains a unified MoE model directly on the merged private datasets from all clients, thereby achieving strong performance but breaking privacy constraints. 
\end{enumerate*}

\textbf{Results.} 
{\color{linkcolorx} Tables~\ref{tab:cv-vit-b32}} and \ref{tab:cv-vit-b16}
report the testing accuracy on the three client domains when using CLIP ViT-B/32 and CLIP ViT-B/16 as the seed model, respectively. 
Across both backbones, our proposed \methodName consistently outperforms all privacy-preserving baselines. 
Compared with BTM, which ensembles predictions from independent experts without parameter sharing, \methodName integrates experts at the parameter level and leverages a router to coordinate experts, thereby achieving better performance. 
Compared with ModelSoup—which averages model parameters—\methodName achieves much higher accuracy by explicitly retaining expert specialization within a unified MoE architecture. 
In contrast to BTX and FlexOlmo, which also employ MoE-style unification but rely on proxy data without explicit distribution alignment, \methodName yields clear gains by combining relevance-weighted DPP-selected proxy subsets with proxy-aligned expert training, allowing routers to coordinate experts more effectively. 
Quantitatively, \methodName achieves an average accuracy of $94.52\%$ with CLIP ViT-B/32 and $96.24\%$ with CLIP ViT-B/16, outperforming the strongest baseline FlexOlmo ($92.92\%$ and $93.53\%$) by about $1.6$ and $2.7$ points, respectively.  

\begin{table}[!h]
	\centering
	% \vskip -.2in
	\caption{Accuracy of CV Tasks when using CLIP ViT-B/32 as the seed model.}
	\vskip -.1in
  \resizebox{.48\textwidth}{!}{
	\begin{NiceTabular}{lcccc}
		\midrule
		 & Pets & Flowers & EuroSAT & \textbf{Average} \\
		\midrule
		UnrestrictedMoE & 92.45 &	96.43 &	98.15 &	95.68 \\
		\midrule
		ZeroShot & 85.77 & 61.59 & 29.81 & 59.06 \\
		Expert I (Pets) & 92.40 & 59.03 & 22.74 & 58.06 \\
		Expert II (Flowers) & 84.25 & 96.91 & 27.21 & 69.46 \\
		Expert III (EuroSAT) & 82.64 & 52.42 & 97.91 & 77.66 \\
		\midrule
		BTM  & 90.81 & 85.10 & 95.07 & 90.33 \\
		ModelSoup  & 87.90 & 70.52 & 64.19 & 74.20 \\
		BTX  & 88.44 & 75.07 & 59.38 & 74.30 \\
		FlexOlmo  & 91.36 & 90.62 & 96.79 & 92.92 \\ 
		\rowcolor{Gray}
		\methodName & \textbf{91.91} &	\textbf{93.67} &	\textbf{97.98}	& \textbf{94.52} \\
		\bottomrule
	\end{NiceTabular}    
  }
	\label{tab:cv-vit-b32}
	\vskip -.05in
\end{table}
	
\begin{table}[!h]
	\centering
	% \vskip -.05in
	\caption{Testing Accuracy of CV Tasks when using CLIP ViT-B/16 as the seed model.}
	\vskip -.1in
  \resizebox{.48\textwidth}{!}{
	\begin{NiceTabular}{lcccc}
		\midrule
		 & Pets & Flowers & EuroSAT & \textbf{Average} \\
		\midrule
		UnrestrictedMoE & 94.30 & 97.73 & 98.23 & 96.75 \\
		\midrule
		ZeroShot & 88.53 & 68.09 & 34.00 & 63.54 \\
		Expert I (Pets) & 94.44 & 65.69 & 30.10 & 63.41 \\
		Expert II (Flowers) & 85.99 & 98.13 & 24.05 & 69.39 \\
		Expert III (EuroSAT) & 86.56 & 60.05 & 98.43 & 81.68 \\
		\midrule
		BTM & 93.21 & 84.65 & 97.38 & 91.75 \\
		ModelSoup & 89.94 & 74.18 & 74.14 & 79.42 \\
		BTX & 89.89 & 78.52 & 75.20 & 81.20 \\
		FlexOlmo & 94.09 & 89.40 & 97.11 & 93.53 \\
		\rowcolor{Gray}
		\methodName & \textbf{94.22} & \textbf{97.08} & \textbf{97.41} & \textbf{96.24} \\
		\bottomrule
	\end{NiceTabular}    
  }
	\label{tab:cv-vit-b16}
	\vskip -.1in
\end{table}	

\subsection{Experiments on NLP Tasks}
\label{sec:nlp-experiments}

\textbf{Datasets.} 
We evaluate on three benchmarks for commonsense reasoning: 
\begin{enumerate*}[(i), series = tobecont, itemjoin = \quad]
	\item CommonsenseQA~\citep{talmor2019commonsenseqa} (denoted as CSQA), which tests general world knowledge through discrimination among semantically related concepts, 
	\item CosmosQA~\citep{huang2019cosmos}, which emphasizes narrative reasoning by requiring inference of implicit causes and effects, and 
	\item SocialIQA~\citep{sap2019social}, which focuses on social commonsense involving human actions, motivations, and social implications. 
\end{enumerate*}
Proxy data are selected from Alpaca~\citep{alpaca}, a publicly available collection of 52K instruction–response pairs that spans diverse instruction-tuning tasks.
Note that the public dataset (Alpaca) has no domain overlap with the client datasets.

\textbf{Models.} 
We evaluate on two LLaMA models across different scales: LLaMA-3.2-3B~\citep{llama3-2}, a lightweight 3B model for efficiency-critical settings, and LLaMA-3.1-8B~\citep{llama3-1}, a larger 8B model offering stronger performance with higher compute cost.

\textbf{Implementation Details.} 
For each client, we form a proxy dataset by first selecting a candidate pool of $n=3000$ Alpaca samples most similar to its private data, then choosing $m=500$ proxies via relevance-weighted DPP with the cosine similarity kernel 
\(
\kernel(\vx_i,\vx_j) = \cos(\vz_i, \vz_j)
\), where $\vz_i$ is $\vx$'s embedding extracted from $\hM_0$. 
Client models are initialized from the seed model $\hM_0$. 
For proxy-aligned expert training, we finetune the model's FFN sublayers on both private and proxy data using LoRA~\citep{hu2022lora} (rank $16$, $\alpha=32$) for $10$ epochs with AdamW~\citep{loshchilov2017decoupled} (learning rate $0.0001$, batch size $32$, no weight decay, constant schedule with $200$ warm-up steps). 
For router training, we adopt top-1 routing and finetune for $1$ epoch with AdamW (learning rate $0.0001$, batch size $32$).

\begin{table}[!t]
    \centering
	% \vskip -.3in
	\caption{Accuracy of NLP Tasks when using LLaMA-3.2-3B as the seed model.}
	\vskip -.1in
  \resizebox{.48\textwidth}{!}{
    \begin{NiceTabular}{lcccc}
        \midrule
         & CSQA & CosmosQA & SocialIQA & \textbf{Average} \\
        \midrule
        UnrestrictedMoE & 75.51 & 78.39 & 71.80 & 75.23 \\
		\midrule
		ZeroShot & 62.49 & 62.68 & 56.19 & 60.45 \\
		Expert I (CSQA) & 74.94 & 70.18 & 60.54 & 68.55 \\
		Expert II (CosmosQA) & 63.06 & 78.69 & 58.96 & 66.90 \\
		Expert III (SocialIQA) & 65.44 & 68.04 & 72.42 & 68.63\\
		\midrule
		BTM & 74.61 & 75.44 & 68.17 & 72.74 \\
		ModelSoup & 73.71 & 75.24 & 71.75 & 73.57 \\
		BTX & 69.62 &	72.06 &	71.75 &	71.14 \\
		FlexOlmo & 73.30 & 73.33 & 70.88 & 72.50 \\ \rowcolor{Gray}
		\methodName & \textbf{74.94} & \textbf{76.05} & \textbf{72.26} & \textbf{74.42} \\
		\bottomrule
    \end{NiceTabular}  
  }  
    \label{tab:nlp-llama-3.2-3b}
	\vskip -.05in
\end{table}

\begin{table}[!t]
    \centering
	% \vskip -.05in
	\caption{Accuracy of NLP Tasks when using LLaMA-3.1-8B as the seed model.}
	\vskip -.1in
  \resizebox{.48\textwidth}{!}{
    \begin{NiceTabular}{lcccc}
        \midrule
         & CSQA & CosmosQA & SocialIQA & \textbf{Average} \\
        \midrule
		UnrestrictedMoE & 81.90 & 85.70 & 76.41 & 81.34 \\
		\midrule
		ZeroShot & 69.37 & 75.98 & 57.57 & 67.64 \\
		Expert I (CSQA) & 81.24 & 77.29 & 69.19 & 75.91 \\
		Expert II (CosmosQA) & 71.42 & 86.23 & 67.86 & 75.17 \\
		Expert III (SocialIQA) & 73.63 & 76.98 & 78.10 & 76.24 \\
		\midrule	
		BTM & 81.24 & 80.84 & 77.28 & 79.79 \\ 	
		ModelSoup & 80.26 & 84.25 & 77.02 & 80.51 \\
		BTX & 75.76 & 81.04 & 73.39 & 76.73 \\
		FlexOlmo & 75.18 & 81.11 & 76.10 & 77.46 \\ \rowcolor{Gray}
		\methodName & \textbf{81.33} & \textbf{85.80} & \textbf{77.64} & \textbf{81.59} \\
		\bottomrule
    \end{NiceTabular}  
  }  
    \label{tab:nlp-llama-3.1-8b}
	\vskip -.15in
\end{table}

\textbf{Results.} 
{\color{linkcolorx}Tables~\ref{tab:nlp-llama-3.2-3b}} and \ref{tab:nlp-llama-3.1-8b} report the testing accuracy of NLP tasks with LLaMA-3.2-3B and LLaMA-3.1-8B as the seed models, respectively. 
Across both backbones, \methodName consistently outperforms all privacy-preserving baselines. 
Unlike BTM, which ensembles outputs without parameter sharing, or ModelSoup, which only averages parameters, \methodName preserves expert specialization within a unified MoE model and enables router-based coordination. 
Compared with BTX and FlexOlmo, 
which also use proxy data but lack relevance-diverse alignment mechanisms,
\methodName further benefits from DPP-selected proxies that approximate private distributions more accurately, leading to more effective routing. 
Quantitatively, \methodName achieves $74.42\%$ average accuracy on LLaMA-3.2-3B and $81.59\%$ on LLaMA-3.1-8B, outperforming the strongest baselines.

\subsection{Effectiveness of Relevance-Weighted DPP}
\label{sec:dpp-usefulness}

We conduct an ablation study to evaluate the effect of incorporating relevance-weighted DPP into proxy data selection (\autoref{sec:proxy-selection}) for \methodName and the recent method FlexOlmo. 
The configuration denoted as ``\xmark''~~in \autoref{tab:dpp-usefulness} corresponds to FlexOlmo's original similarity-based approach, which selects public samples based on their relevance scores to the client's private data, without enforcing diversity.
As shown in \autoref{tab:dpp-usefulness}, DPP consistently boosts accuracy across both NLP and CV tasks, highlighting its effectiveness as a selection strategy. 
For FlexOlmo, incorporating DPP leads to clear accuracy gains across all tasks (e.g., +0.85 on LLaMA-3.2-3B and +2.32 on LLaMA-3.1-8B), demonstrating that existing
methods can benefit significantly from our proposed relevance-weighted DPP proxy selection.
\methodName further boosts these benefits, with consistent improvements in all settings, ultimately achieving the best overall performance.
These findings demonstrate that relevance-weighted DPP is essential for selecting proxy samples that effectively train the router to coordinate experts.

\begin{table}[!h]
    \centering
	% \vskip -.05in
    \caption{Accuracy of FlexOlmo and \methodName with and without relevance-weighted (RW) DPP.}
	\vskip -.1in
	\label{tab:dpp-usefulness}
	\resizebox{.48\textwidth}{!}{
    \begin{NiceTabular}{l|c|cccc}
		\CodeBefore
		\rectanglecolor{Gray}{4-2}{4-6}
		\rectanglecolor{Gray}{6-2}{6-6}
		\Body
        \toprule
		& \multirow{2}{*}{RW DPP} & \multicolumn{2}{c}{NLP} & \multicolumn{2}{c}{CV} \\
		\cmidrule(lr){3-4} \cmidrule(lr){5-6}
         & & LLaMA-3.2-3B & LLaMA-3.1-8B &  ViT-B/32 &  ViT-B/16 \\
		 \midrule
        \multirow{2}{*}{\rotatebox[origin=c]{20}{FlexOlmo\!\!\!}} & \xmark & 72.50 & 77.46 & 92.92 & 93.53 \\  
          & \cmark & 73.35 & 79.78 & 93.20 & 94.38 \\
        \midrule
        \multirow{2}{*}{\rotatebox[origin=c]{20}{\methodName\!\!\!}} & \xmark & 73.60 & 80.32 & 94.12 & 95.39 \\
          & \cmark & \textbf{74.42} & \textbf{81.59} & \textbf{94.52} & \textbf{96.24} \\
        \bottomrule
        \hline
    \end{NiceTabular}	
	}
	% \vskip -.1in
\end{table}

\textbf{Visualization of Selected Proxy Samples.}
In \autoref{fig:proxy-selection-comparison}, we use t-SNE~\citep{van2008visualizing} to visualize the proxy samples selected by three different selection strategies:
\emph{random sampling}, \emph{similarity-based selection} used in FlexOlmo,
and our \methodName based on \emph{relevance-weighted DPP}.
As can be seen from \autoref{fig:random-selection}, random selection fails to capture either relevance or diversity, often yielding samples
that are poorly aligned with the private data distribution.
FlexOlmo improves alignment by selecting samples highly similar to the private domain,
but the resulting proxies lack diversity: many chosen samples cluster in a narrow region
of the data space, providing limited coverage of the right side of the private-data manifold (\autoref{fig:flexolmo-selection}).
In contrast, our \methodName explicitly balances relevance and diversity
through the relevance-weighted DPP kernel. 
As shown in \autoref{fig:our-selection}, the resulting proxy set
not only anchors closely to the private domain but also forms a diverse cover of the data space.
This richer proxy distribution allows the router to discriminate across
heterogeneous contexts. 
Consequently, our \methodName provides a stronger and more aligned proxy of the unavailable private data, which translates into consistent performance gains (\autoref{tab:dpp-usefulness}).

\subsection{Ablation Study}
\label{sec:ablation-context-aware-router}

We examine \textbf{the effectiveness of the context-aware router} (\autoref{sec:context-aware-router}). As shown in \autoref{tab:context-aware-router}, incorporating sequence-level context consistently yields higher accuracy across all evaluated datasets and backbones. These gains demonstrate that incorporating sequence-level context into the router enables more reliable expert assignment.

\begin{table}[!t]
    \centering
	% \vskip -.15in
    \caption{Average accuracy of \methodName with and without context-aware (CA) router across NLP and CV tasks.}
	\vskip -.1in
	\resizebox{.48\textwidth}{!}{
    \begin{NiceTabular}{c|cccc}
		\CodeBefore
		\rectanglecolor{Gray}{4-1}{4-5}
		\Body
        \toprule
		\multirow{2}{*}{CA Router} & \multicolumn{2}{c}{NLP} & \multicolumn{2}{c}{CV} \\
		\cmidrule(lr){2-3} \cmidrule(lr){4-5}
         & LLaMA-3.2-3B & LLaMA-3.1-8B & ViT-B/32 & ViT-B/16 \\
        \midrule
		\xmark & 72.62 & 79.41 & 93.92 & 95.94 \\
		\cmark & \textbf{74.42} & \textbf{81.59} & \textbf{94.52} & \textbf{96.24} \\
        \bottomrule
        \hline
    \end{NiceTabular}
	}
    \label{tab:context-aware-router}
	\vskip -.05in
\end{table}

We assess \textbf{the effectiveness of proxy-aligned expert training} (\autoref{sec:expert-training}) by comparing experts trained only on private data with those additionally exposed to client-specific proxy data. Experiments are conducted on both CV and NLP tasks.
As shown in \autoref{tab:proxy-training}, incorporating proxy data yields consistent gains, suggesting proxy-aligned training enables more effective expert collaboration.

\begin{table}[!t]
\centering
% \vskip -.15in
\caption{Average accuracy of \methodName with and without Proxy-Aligned (PA) Expert Training across NLP and CV tasks.}
	\vskip -.1in
\resizebox{.48\textwidth}{!}{
    \begin{NiceTabular}{c|cccc}
		\CodeBefore
		\rectanglecolor{Gray}{4-1}{4-5}
        \Body
        \toprule
		\multirow{2}{*}{PA Training} & \multicolumn{2}{c}{NLP} & \multicolumn{2}{c}{CV} \\
		\cmidrule(lr){2-3} \cmidrule(lr){4-5}
         & LLaMA-3.2-3B & LLaMA-3.1-8B & ViT-B/32 & ViT-B/16 \\
        \midrule
		\xmark & 72.71 & 80.99 & 92.98 & 94.09 \\
		\cmark & \textbf{74.42} & \textbf{81.59} & \textbf{94.52} & \textbf{96.24} \\
        \bottomrule
        \hline
    \end{NiceTabular}
}
    \label{tab:proxy-training}
	% \vskip -.15in
\end{table}

\begin{figure}[!t]
	\centering
	% \vskip -.1in
		\includegraphics[width=0.45\textwidth]{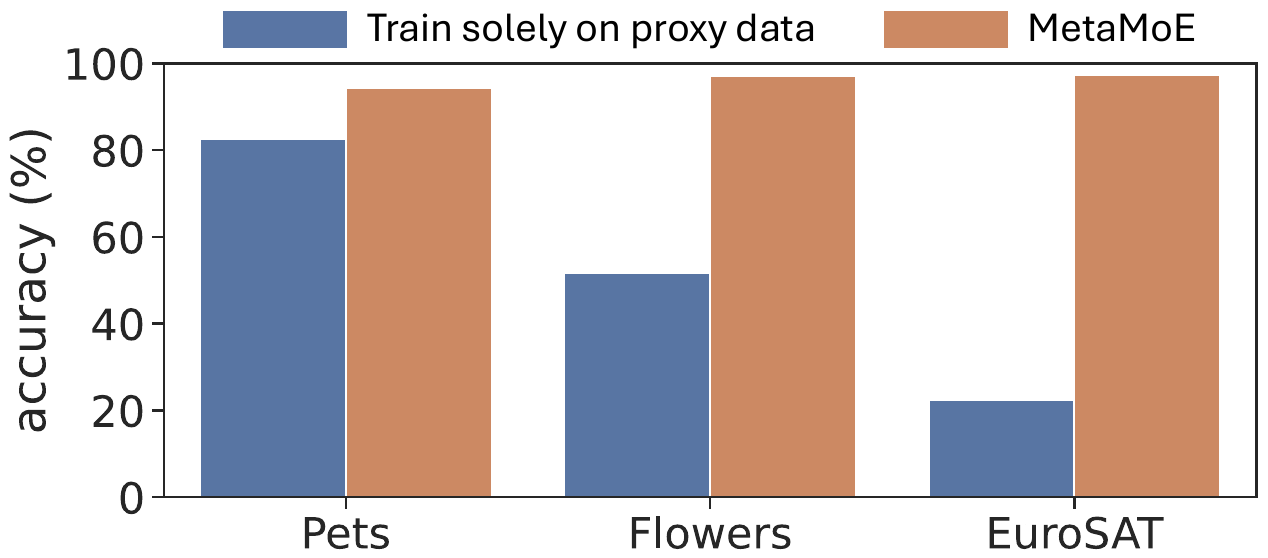}
		\vskip -.05in
		\caption{Comparison of \methodName with training solely on proxy data in the  CV setting with CLIP ViT-B/16.} 
		\label{fig:ablation-study-combination-ViT-B-16}
		% \vskip -.15in
	\end{figure}

\subsection{Effect of Expert Training without Private Data}
\label{sec:appendix:proxy-only}

To isolate the role of private data, we conduct an ablation on CV tasks with CLIP ViT-B/16 in which experts are trained solely on proxy samples and never observe client private data. As shown in \autoref{fig:ablation-study-combination-ViT-B-16}, training experts exclusively on proxy data results in substantial accuracy degradation across all domains. This confirms that proxy samples alone are insufficient for learning domain-specific expertise and primarily provide coordination signals for router learning, rather than effective supervision for expert training.

In contrast, \methodName combines private data for expert specialization with proxy data for alignment and routing, achieving significantly higher accuracy. These results demonstrate that private-domain data are essential for learning strong experts, while proxy data play a complementary role by facilitating effective router learning.

\subsection{Robustness to the Choice of Public Dataset}
\label{sec:appendix:public-dataset-robustness}

Our main NLP experiments use Alpaca~\citep{alpaca} as the public dataset $\hD_0$.
To verify that \methodName does not depend on a specific public corpus, we conduct an additional experiment using OpenOrca~\citep{OpenOrca} as an alternative public dataset.
OpenOrca is a large-scale collection of augmented FLAN data~\citep{goodson2023huggyflan} with GPT-generated responses, differing substantially from Alpaca in both scale and construction methodology.
All other settings (seed models, client datasets, hyperparameters) remain identical to those in \autoref{sec:nlp-experiments}.

As shown in \autoref{tab:openorca}, \methodName consistently outperforms all baselines when using OpenOrca, confirming that a general-purpose public dataset suffices without careful domain-specific curation.
Combined with the results using Alpaca and the ImageNet experiment on CV tasks (\autoref{sec:cv-experiments}), these results demonstrate that \methodName is robust across diverse public dataset choices in both modalities.

\begin{table}[!h]
    \centering
    \caption{Accuracy of NLP tasks using {OpenOrca} as the public dataset with LLaMA-3.2-3B as the seed model.}
    \vskip -.1in
	\resizebox{.48\textwidth}{!}{
    \begin{NiceTabular}{lcccc}
        \toprule
         & CSQA & CosmosQA & SocialIQA & \textbf{Average} \\
        \midrule
        ModelSoup & 73.71 & 75.24 & 71.75 & 73.57 \\
        BTM & 74.61 & 75.44 & 68.17 & 72.74 \\
        BTX & 71.66 & 72.90 & 69.40 & 71.32 \\
        FlexOlmo & 73.63 & 73.97 & 70.73 & 72.78 \\
        \rowcolor{Gray}
        \methodName & \textbf{75.18} & \textbf{77.12} & \textbf{72.06} & \textbf{74.79} \\
        \bottomrule
    \end{NiceTabular}
	}
    \label{tab:openorca}
\end{table}

% %%%%%%%%%%%%%%%%%%%%%%%%
% % conclusion
% %%%%%%%%%%%%%%%%%%%%%%%%

\section{Conclusion}
\label{sec:conclusion}

We proposed \methodName, a privacy-preserving framework for unifying independently trained experts into a single Mixture-of-Experts model. By leveraging diversity-aware proxy selection for router learning and proxy-aligned expert training, \methodName enables effective expert coordination without sharing private data.
We further provide formal privacy guarantees showing that the shared
routing vectors have bounded per-sample sensitivity and do not reveal
recoverable domain-level information about private data.
Experiments on computer vision and natural language processing benchmarks show consistent improvements over recent methods, underscoring the importance of diversity-aware proxies for privacy-preserving MoE unification.

% \section*{Accessibility}

% xxx

% \section*{Software and Data}

% xxx

\section*{Acknowledgements}

The research work described in this paper was conducted in the JC STEM Lab of Machine Learning
and Symbolic Reasoning funded by The Hong Kong Jockey Club Charities Trust.

\section*{Impact Statement}

This paper proposes a framework for privacy-preserving model
unification. While \methodName enforces data residency and provides
formal guarantees on the shared routing vectors, deploying it in high-stakes domains
(e.g., healthcare, finance) may require additional safeguards such as
formal differential privacy mechanisms or external audits. We encourage
practitioners to evaluate the privacy properties of all communicated
artifacts in the context of their specific regulatory requirements.

\bibliography{ref}
\bibliographystyle{icml2026}

%%%%%%%%%%%%%%%%%%%%%%%%%%%%%%%%%%%%%%%%%%%%%%%%%%%%%%%%%%%%%%%%%%%%%%%%%%%%%%%
%%%%%%%%%%%%%%%%%%%%%%%%%%%%%%%%%%%%%%%%%%%%%%%%%%%%%%%%%%%%%%%%%%%%%%%%%%%%%%%
% APPENDIX
%%%%%%%%%%%%%%%%%%%%%%%%%%%%%%%%%%%%%%%%%%%%%%%%%%%%%%%%%%%%%%%%%%%%%%%%%%%%%%%
%%%%%%%%%%%%%%%%%%%%%%%%%%%%%%%%%%%%%%%%%%%%%%%%%%%%%%%%%%%%%%%%%%%%%%%%%%%%%%%
\newpage
\appendix
\onecolumn

\section{Computation of Relevance Score}
\label{appendix:relevance}

Following FlexOlmo~\citep{shi2025flexolmo}, we compute the relevance score 
$g(\vx, \hD_p)$ of a public sample $\vx \in \hD_0$ with respect to a client dataset $\hD_p$ 
by training a binary classifier to distinguish $\hD_p$ from $\hD_0$. 
Specifically, we construct a training set by labeling samples from $\hD_p$ as positive and 
randomly drawing 10K samples from $\hD_0$ as negative. 
We append a classification head to the last hidden layer of the seed model, 
and finetune this classifier to distinguish whether a sample comes from the client dataset $\hD_p$ 
(positive) or the public dataset $\hD_0$ (negative). 
After training, 
we apply the classifier to every $\vx \in \hD_0$. 
The predicted probability that $\vx$ belongs to $\hD_p$ is used as the relevance score:
\begin{align}
    g(\vx, \hD_p) = \bP[\text{classifier predicts } \vx \in \hD_p].
\end{align}
This score quantifies the extent to which each public sample is representative of the private domain.  
In practice, we rank all $\vx \in \hD_0$ by $g(\vx, \hD_p)$ 
and retain the top-$n$ candidates for subsequent proxy selection.

\section{Cholesky Updates for Efficient Inference}
\label{appendix:cholesky}

In this appendix, we show how to make greedy DPP MAP inference computationally efficient for large-scale proxy selection. 
The main challenge is the repeated evaluation of determinants for growing kernel submatrices. 
A naive implementation recomputes a full factorization for each of the $n$ candidates at every greedy step, incurring $\hO(nm^3)$ time per iteration when the current subset has size $m$. 
We address this bottleneck by maintaining a Cholesky factorization of the current kernel submatrix and updating it incrementally as new elements are considered. 
With this strategy, scoring all $n$ candidates in a greedy iteration requires only $\hO(nm)$ time. 
The derivation below details this update mechanism and explains its implications for scalability.

\paragraph{Naive Computation.} 
Consider a candidate subset $\hS \subseteq \{1,\dots,n\}$ with associated kernel submatrix $\widetilde{\vL}_{\hS}$. 
At each step of greedy MAP inference, one must compute 
\[
\det(\widetilde{\vL}_{\hS \cup \{\vx\}})
\]
for a new candidate $\vx \notin \hS$. 
If performed directly, this requires recomputing the determinant of an $(|\hS|+1)\times(|\hS|+1)$ matrix, incurring $\hO(m^3)$ time per evaluation. 
Over all $n$ candidates and $m$ greedy steps, the total cost for each greedy step scales as $\hO(nm^3)$, which is infeasible when both $n$ and $m$ are large.

\paragraph{Cholesky Factorization.} 
To avoid redundant recomputation, we exploit the Cholesky decomposition~\citep{Horn_Johnson_1985}. 
Suppose the current kernel submatrix admits a decomposition
\[
\widetilde{\vL}_{\hS} = \vP \vP^\top,
\]
where $\vP$ is a lower-triangular matrix of size $|\hS|\times|\hS|$. 
The determinant is then easily obtained as
\[
\det(\widetilde{\vL}_{\hS}) = \prod_{i=1}^{|\hS|} \vP_{ii}^2,
\]
so that the computational burden shifts from determinant computation to maintaining $\vP$. 

\paragraph{Incremental Update.}  
When a new element $\vx$ is considered, the augmented kernel matrix can be written in block form:
\begin{align}
\widetilde{\vL}_{\hS \cup \{\vx\}} =
\begin{bmatrix}
\widetilde{\vL}_{\hS} & \vk \\
\vk^\top & \widetilde{\vL}_{\vx\vx}
\end{bmatrix},
\label{eq:updated-kernel-matrix}
\end{align}
where $\vk$ contains the similarities between $\vx$ and items in $\hS$. 
Instead of recomputing a full factorization, we extend $\vP$ by one row and column:
\[
\vP' =
\begin{bmatrix}
\vP & 0 \\
\vy^\top & \sigma
\end{bmatrix},
\]
so that $\vP' \vP'^\top = \widetilde{\vL}_{\hS \cup \{\vx\}}$.  
Expanding both sides and equating with \eqref{eq:updated-kernel-matrix} gives
\[
\vP \vy = \vk, 
\qquad
\sigma^2 = \widetilde{\vL}_{\vx\vx} - \|\vy\|_2^2.
\]
Thus, the update reduces to solving a triangular system (for $\vy$) and computing a residual variance (for $\sigma^2$). 
Both steps are efficient: solving a triangular system costs $\hO(m)$, and computing a norm is linear in $m$ as well. 
Hence, each iteration of greedy MAP inference only requires a total time of $\hO(nm)$ for searching over the $n$ candidates. 
This makes the approach scalable to large public datasets, while still retaining the DPP’s 
balance between relevance and diversity.

\section{Computer Vision Datasets}
\label{sec:cv-datasets}

\autoref{fig:dataset-samples} presents randomly sampled images from the three client datasets used in the CV experiments: Pets~\citep{jawahar2012cats}, Flowers~\citep{Nilsback2008}, and EuroSAT~\citep{helber2019eurosat}. 
These examples illustrate the visual diversity across domains, ranging from fine-grained object recognition of dog and cat breeds (Pets), to natural scene categorization of flower species (Flowers), and remote sensing imagery for land-use classification (EuroSAT). 
Such heterogeneity highlights the challenge of unifying domain-specialized experts into a single model while preserving privacy and ensuring robust multi-domain generalization.

We adopt ImageNet~\citep{deng2009imagenet} as the public dataset from which proxy samples are drawn. 
\autoref{fig:imagenet-samples} shows randomly sampled examples from ImageNet.

\begin{figure}[!h]
    \centering
    % \vskip -.1in
    \!\!\!
    \subfigure[Pets.\label{fig:pets-samples}]{\includegraphics[width=0.31\textwidth]{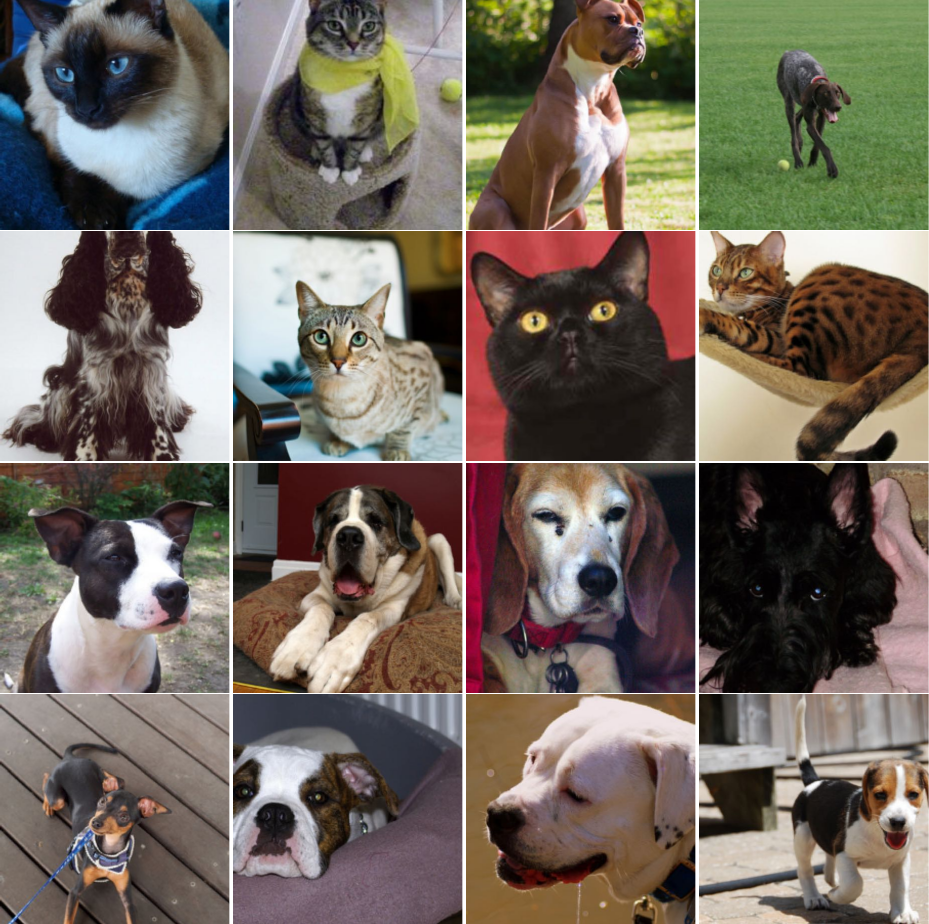}} \;
    \subfigure[Flowers.\label{fig:flowers-samples}]{\includegraphics[width=0.31\textwidth]{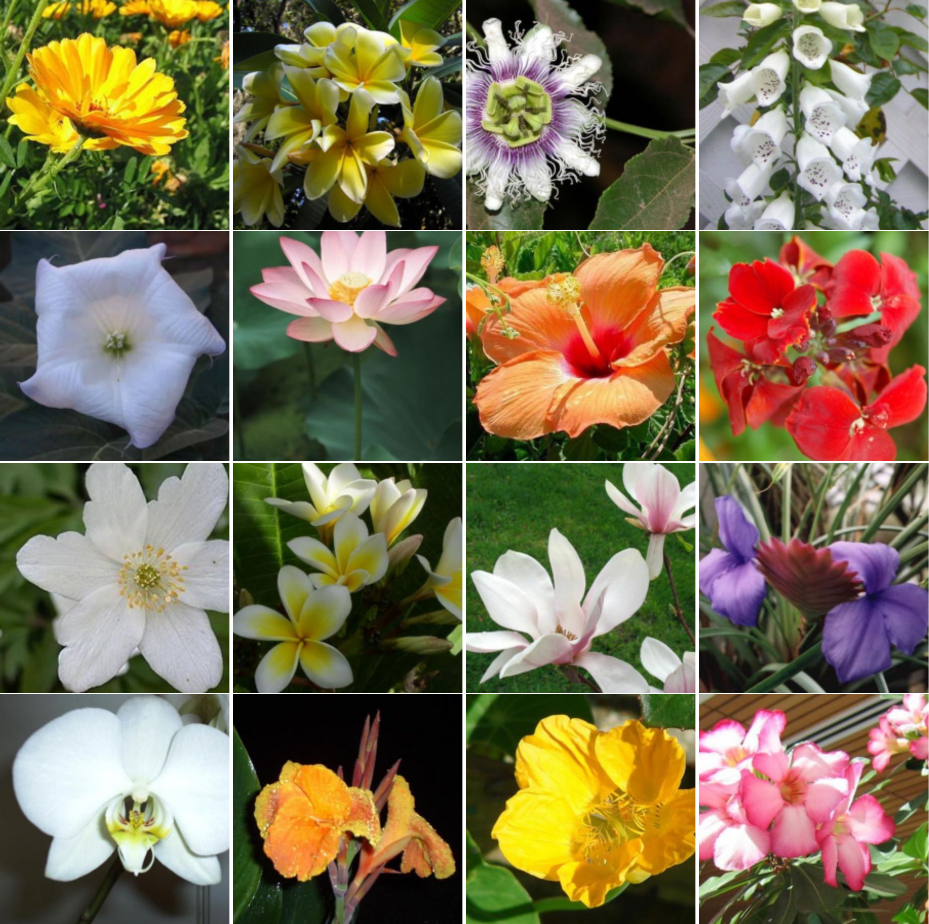}} \;
    \subfigure[EuroSAT.\label{fig:eurosat-samples}]{\includegraphics[width=0.31\textwidth]{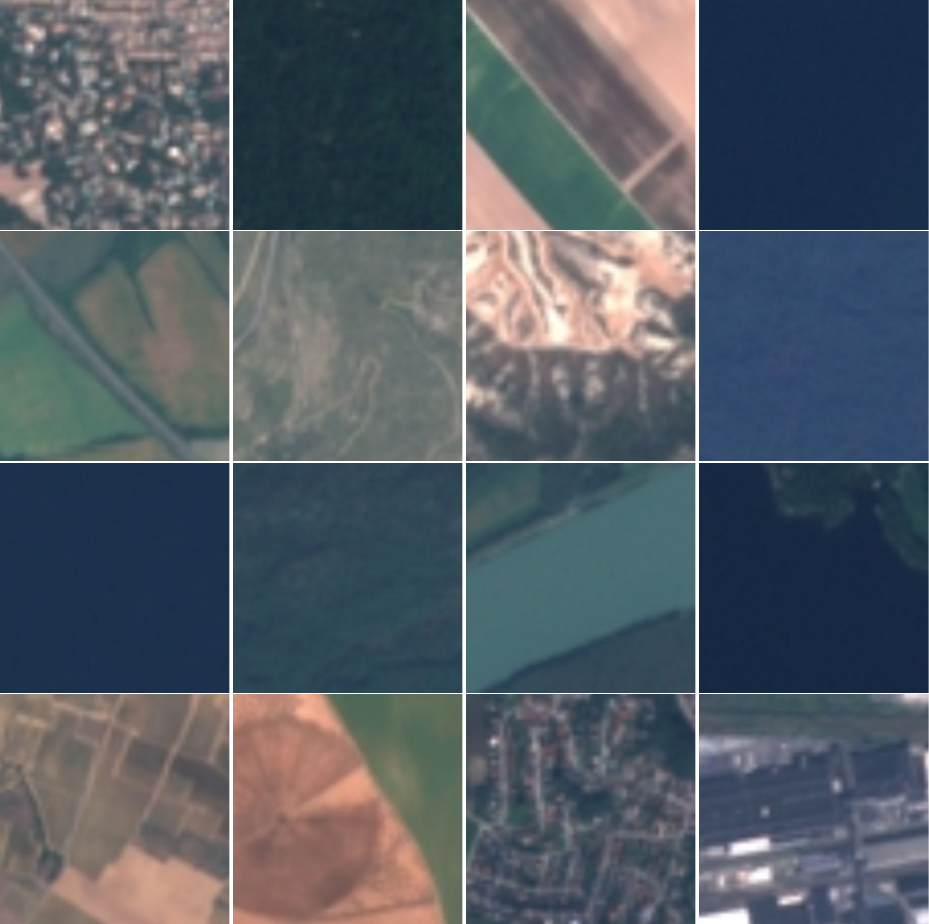}}\!\!\!
    \vskip -.15in
    \caption{Sample images from the three client domains: Pets, Flowers, and EuroSAT.}
    \label{fig:dataset-samples}
    \vskip -.05in
\end{figure}

\begin{figure}[!h]
\centering
\includegraphics[width=.95\textwidth]{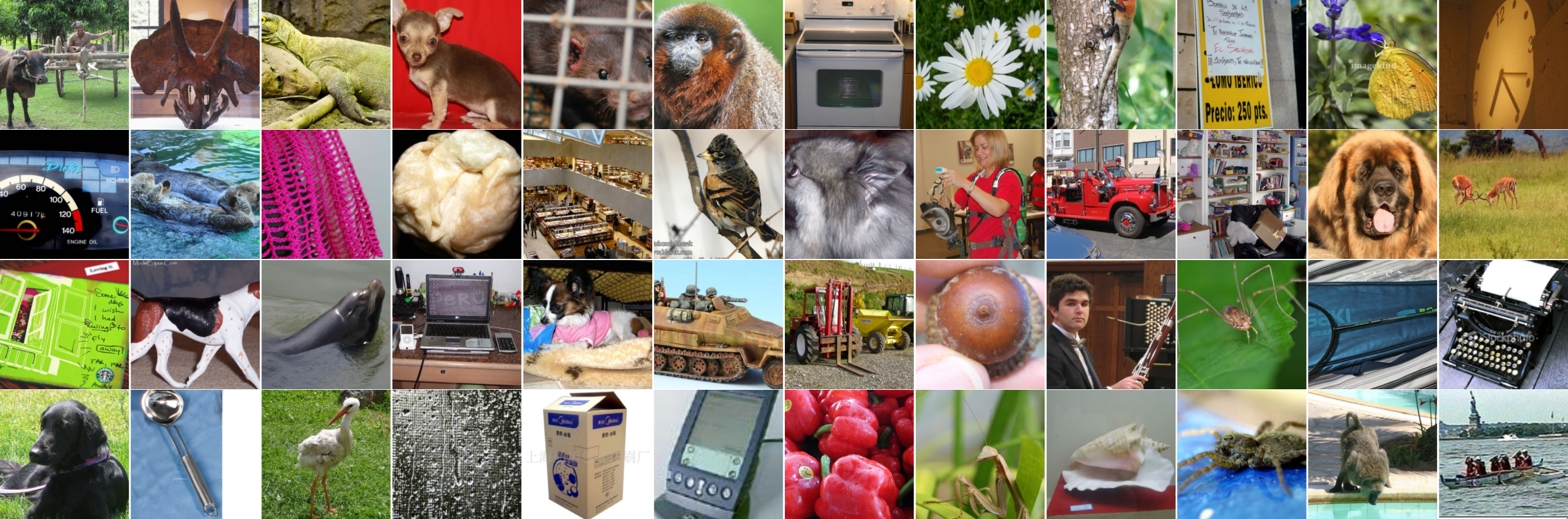}
\vskip -.05in
\caption{Sample images from ImageNet.} 
\label{fig:imagenet-samples}
% \vskip -.15in
\end{figure}

\section{Natural Language Processing Datasets}
\label{sec:nlp-datasets}

The client-side NLP datasets comprise CommonsenseQA~\citep{talmor2019commonsenseqa}, 
CosmosQA~\citep{huang2019cosmos},
and
SocialIQA~\citep{sap2019social}. 
These cover complementary reasoning skills: CommonsenseQA requires grounding abstract questions in everyday knowledge; CosmosQA emphasizes multi-sentence comprehension with causal and temporal reasoning; and SocialIQA targets social motivations and reactions in human interactions. 
{\color{linkcolorx}Examples~\ref{example:commonsenseqa-samples}--\ref{example:socialiqa-samples}} show two representative samples from CommonsenseQA, CosmosQA, and SocialIQA, respectively.
Such diversity highlights the challenge of integrating domain-specialized experts in NLP while ensuring broad generalization across reasoning styles and task formats.

As the public corpus $\hD_0$, we use Alpaca~\citep{alpaca}, an open-domain instruction–response dataset ranging from factual queries to reasoning and generation. 
{\color{linkcolorx}Example~\ref{example:alpaca-samples}} illustrates three instances from Alpaca.

\begin{example}{Samples from CommonsenseQA~\citep{talmor2019commonsenseqa}}{commonsenseqa-samples}
    \footnotesize
    \textbf{Question:} The fox walked from the city into the forest, what was it looking for?  

    (A) pretty flowers \\
    (B) hen house \\
    (C) natural habitat \\
    (D) storybook \\
    (E) dense forest \\
    \textbf{Answer:} (C) natural habitat
    \\

    \textbf{Question:} To learn one must have the right book, to work efficiently what must one have?  
    
    (A) improve yourself \\
    (B) become knowledgeable \\
    (C) have tools \\
    (D) persistence \\
    (E) have more knowledge \\
    \textbf{Answer:} (C) have tools
\end{example}

\begin{example}{Samples from CosmosQA~\citep{huang2019cosmos}}{cosmosqa-samples}
    \footnotesize
Let me be clear, everything is good. Having said that, I 've been a little preoccupied lately, as Leslie threw out her back on Sunday. It's been an interesting, and marriage-solidifying 36 hours. Sunday afternoon, around 2 pm, Leslie and I were hanging in the library, sans kids (they were with their father), listening to music and talking.  

\textbf{Question:} How did Leslie's throwing out of her back help to solidify your marriage?  

(A) Leslie realized that I am really good at taking care of people and wants to keep me around for that reason \\  
(B) Leslie saw that if she ever hurt herself she would need my help to heal and realized that she needs me \\  
(C) None of the above choices \\  
(D) We were able to spend time together just talking and that helped us to connect better \\  
\textbf{Answer:} (D) We were able to spend time together just talking and that helped us to connect better
\\

But the thing is, I forgave and forgot a while ago. I 'm not bitter anymore. I just genuinely don’t want to go back to that church. I couldn’t get the balls to tell her that ...  

\textbf{Question:} Why wouldn’t I have the balls to tell her that?  

(A) I am not a man and I can’t tell her those things to her face \\  
(B) I am afraid of that church and I would feel less manly if I told her \\  
(C) None of the above choices \\  
(D) It is her favorite church and I don’t want to hurt her feelings \\  
\textbf{Answer:} (D) It is her favorite church and I don’t want to hurt her feelings
\end{example}

\begin{example}{Samples from SocialIQA~\citep{sap2019social}}{socialiqa-samples}
    \footnotesize
    During their yearbook class, Alex took an old photo of herself and Quinn, and crossed out Quinn's face on the photograph. Quinn saw her do this.  

    \textbf{Question:} How would Quinn feel as a result?  
    
    (A) Like Alex misses hanging out with her \\
    (B) Like Alex betrayed her \\
    (C) be friends again \\
    \textbf{Answer:} (B) Like Alex betrayed her
    \\

    Tracy was teaching Kendall to dance so Tracy raised Kendall's hips into the right position for the next move.  

\textbf{Question:} How would you describe Tracy?  

(A) a bad teacher \\
(B) hopeful that this doesn't make Kendall uncomfortable \\
(C) a patient teacher \\
\textbf{Answer:} (B) hopeful that this doesn't make Kendall uncomfortable
\end{example}

\begin{example}{Samples from Alpaca~\citep{alpaca}}{alpaca-samples}
    \footnotesize
\textbf{Question:} Use the given data to calculate the median.\\
\textbf{Input:} \([2,\,3,\,7,\,8,\,10]\)\\
\textbf{Answer:} The median is \(7\).\\

\textbf{Question:} Analyze the given text for its tone.   \\
\textbf{Input:} The world has been greatly impacted by the COVID-19 pandemic and it has drastically changed our lives.   \\
\textbf{Answer:} The tone of the text is one of concern and recognition of the dramatic effects of the COVID-19 pandemic on the world. It is an acknowledgement of the upheaval that the virus has caused and that our lives have been drastically changed.   \\

\textbf{Question:} Explain the concept of artificial intelligence in simple terms.   \\
\textbf{Answer:} Artificial Intelligence (AI) is the ability of machines and computers to make decisions and complete tasks similar to what humans can do. AI can learn from data, recognize patterns, and make predictions. AI technology can automate tedious and complex processes, helping to reduce human effort and errors, and enable more accurate and efficient decisions. AI can be used in a wide range of applications, from robotics and autonomous vehicles to healthcare. It is increasingly becoming an integral part of everyday life.  
\end{example}

\section{Additional Experiments}

\subsection{Computation Overhead of Relevance-Weighted DPP}
\label{sec:appendix:comp-cost}
Both FlexOlmo and \methodName require computing embeddings for all public samples, which dominates the proxy-selection cost. The additional work unique to \methodName is the greedy MAP inference for relevance-weighted DPP, implemented with efficient Cholesky updates (see \autoref{appendix:cholesky}). Table~\ref{tab:appendix:dpp-cost} shows the measured running time for NLP and CV setups.
Relevance-weighted DPP adds
only $0.2$--$0.3$ minutes ($3$--$5\%$) per client, while expert finetuning takes over $10$ GPU-hours,
confirming that the diversity-aware selection introduces only marginal overhead relative to similarity-only baselines while delivering more representative proxy sets.

\begin{table}[h]
    \centering
    \caption{Proxy-selection running time (minutes).}
    \vskip -.1in
    \label{tab:appendix:dpp-cost}
    \begin{NiceTabular}{lcc cc}
        \toprule
        & \multicolumn{2}{c}{{NLP}} & \multicolumn{2}{c}{{CV}} \\
        \cmidrule(lr){2-3}
        \cmidrule(lr){4-5}
        & LLaMA-3.2-3B & LLaMA-3.1-8B & ViT-B/32 & ViT-B/16 \\
        \midrule
        Similarity-based selection & $1.91$ & $4.21$ & $6.41$ & $15.70$ \\
        Relevance-weighted DPP & $2.17$ & $4.54$ & $6.61$ & $15.91$ \\
        \bottomrule
    \end{NiceTabular}
\end{table}

\subsection{Comparison with Federated Learning Methods: CoMiGS~\citep{fandevice} and MoA~\citep{feng2024mixture}}
\label{sec:appendix:fl}
Federated learning (FL) methods differ fundamentally from \methodName: they require exchanging large model states every round. This leads to substantial bandwidth and memory overhead and creates instability when client data are heterogeneous because divergent local updates must be averaged. \methodName eliminates synchronization entirely—each client fine-tunes its expert locally (on private plus proxy data), and only a single exchange of frozen expert weights occurs before router training. 

\textbf{Comparison with CoMiGS~\citep{fandevice}.}
CoMiGS~\citep{fandevice} adopts a federated learning paradigm that requires repeated synchronization among clients—periodically exchanging model parameters for joint optimization. This approach incurs high communication and memory costs and often becomes unstable under heterogeneous client data, leading to degraded performance. In contrast, \methodName avoid exchanging model parameters entirely. Each client independently fine-tunes its expert on private and proxy data, and all experts are unified once through MoE integration. This merge-after-training design achieves better scalability, eliminates communication overhead, and remains stable under heterogeneous data.

\textbf{Comparison with Mixture-of-LoRAs (MoA)~\citep{feng2024mixture}.}
MoA also seeks to unify multiple LoRA experts but assumes direct access to all client data for router training, which violates privacy constraints. In the experiments reported here, MoA is reimplemented by training its router only on public data to maintain expert coordination without violating privacy constraints, enabling a fair comparison under the same privacy-preserving setting.

\textbf{Empirical results.}
\methodName is compared with CoMiGS and MoA under the same LoRA configurations. As shown in \autoref{tab:appendix:fl-3b}, \methodName consistently outperforms both methods across all benchmarks, demonstrating that \methodName's unification offers better performance.

\begin{table}[h]
    \centering
    \caption{Comparison with CoMiGS and MoA on NLP tasks (LLaMA-3.2-3B).}
    \vskip -.1in
    \label{tab:appendix:fl-3b}
    \begin{NiceTabular}{lcccc}
        \toprule
        & CSQA & CosmosQA & SocialIQA & Average \\
        \midrule
        CoMiGS~\citep{fandevice} & $72.32$ & $71.46$ & $71.19$ & $71.65$ \\
        MoA~\citep{feng2024mixture} & $71.09$ & $74.64$ & $70.11$ & $71.95$ \\
        \methodName & \textbf{74.94} & \textbf{76.05} & \textbf{72.26} & \textbf{74.42} \\
        \bottomrule
    \end{NiceTabular}
\end{table}

\subsection{Effect of Expert Training without Private Data}
\label{sec:apd:proxy-only}
To isolate the role of private data, we conduct an ablation where experts are trained solely on proxy samples and never observe client data. We evaluate this proxy-only baseline in the CV setting. As summarized in Tables~\ref{tab:appendix:proxy-only-ViT-B-32} and~\ref{tab:appendix:proxy-only-ViT-B-16}, training exclusively on proxies leads to large accuracy drops, confirming that proxies primarily provide coordination signals for router learning. \methodName, which blends private data (for domain expertise) with proxies (for alignment), substantially outperforms the proxy-only alternative, demonstrating the necessity of private-domain expertise.

\begin{table}[h]
    \centering
    \caption{Proxy-only baseline versus \methodName (CLIP ViT-B/32).}
    \vskip -.1in
    \label{tab:appendix:proxy-only-ViT-B-32}
    \begin{NiceTabular}{lcccc}
        \toprule
        & Pets & Flowers & EuroSAT & Average \\
        \midrule
        Train solely on proxy data & 78.60 & 42.63 & 12.98 & 44.74 \\
        \methodName & \textbf{91.91} & \textbf{93.67} & \textbf{97.98} & \textbf{94.52} \\
        \bottomrule
    \end{NiceTabular}
\end{table}

\begin{table}[h]
    \centering
    \caption{Proxy-only baseline versus \methodName (CLIP ViT-B/16).}
    \vskip -.1in
    \label{tab:appendix:proxy-only-ViT-B-16}
    \begin{NiceTabular}{lcccc}
        \toprule
        & Pets & Flowers & EuroSAT & Average \\
        \midrule
        Train solely on proxy data & 82.47 & 51.73 & 22.30 & 52.17 \\
        \methodName & \textbf{94.22} & \textbf{97.08} & \textbf{97.41} & \textbf{96.24} \\
        \bottomrule
    \end{NiceTabular}
\end{table}

\subsection{Privacy-Preserving Guarantees}
\label{sec:appendix:privacy}

\methodName never exposes private data, and any similarity between selected proxy samples and private data does not constitute a privacy violation, as all proxy candidates are drawn from a public dataset that is already accessible to all parties.

\textbf{(1) Semantic similarity is not equivalent to private-data exposure.}
The relevance-weighted DPP method selects public samples that are representation-wise similar to the client domain. These proxies may resemble private data but are not derived from private samples, and all clients already have access to them.
In privacy-preserving systems, leakage occurs only when \emph{non-public information becomes newly revealed}. Selecting an already public sample—no matter how similar—does not expose any new private information.

\textbf{(2) Overlap with public data does not constitute private-data exposure.}
If a public sample coincidentally overlaps with one in a client's dataset, revealing that sample still constitutes \emph{public-data exposure}, not private-data exposure, since the content is already publicly available prior to any interaction with \methodName.
This principle aligns with standard privacy frameworks such as the California Consumer Privacy Act (CCPA), which explicitly excludes ``publicly available information'' from the definition of personal data. Under this definition, \emph{the exposure of a public sample is not regarded as a violation of privacy}.

\textbf{(3) \methodName introduces no new channels for private-data leakage.}
The proxy-selection process transmits only the IDs of selected public samples and the final expert weights—never private samples, gradients, or intermediate activations. Because all proxy candidates are drawn from a public dataset, the process does not disclose or allow inference about private data.

\subsection{Comparison with LoRASuite~\citep{li2025lorasuite}}
\label{sec:appendix:lorasuite}

\methodName is a low-rank adaptation unification method, whereas LoRASuite~\citep{li2025lorasuite} is not, and the two address fundamentally different goals.
\methodName aims to \emph{unify multiple domain-specialized LoRA experts} trained on the same backbone into a privacy-preserving MoE—answering how to combine many LoRA experts without sharing client data.
In contrast, LoRASuite focuses on \emph{LoRA migration}, transferring a single LoRA adapter trained on backbone (A) to backbone (B) after the backbone is upgraded, with the goal of expert adaptation rather than expert unification.
Furthermore, LoRASuite requires \emph{client data} to align activations and performs poorly without it, while \methodName assumes \emph{no private data access} and relies entirely on public proxies for supervision.
Given these fundamental differences in objectives and data requirements, a direct empirical comparison would be inappropriate, as it would force LoRASuite into a multi-expert unification setting it was never designed for.

\subsection{Computational Cost Analysis}
\label{sec:appendix:computational-cost}

This section provides a detailed comparison of \methodName's computational cost against existing methods, covering both unification time and inference speed across vision and language tasks. The results demonstrate that \methodName achieves strong accuracy improvements without adding significant overhead.

\textbf{Unification cost.}
The merging (unification) time of \methodName is nearly identical to FlexOlmo and BTX across all backbones, showing that the accuracy gains do not come from higher computational cost during unification. While BTM and ModelSoup appear faster (or cost-free), they avoid the coordination required for MoE merging, which explains their lower accuracy. The small extra cost in \methodName is therefore a modest and worthwhile trade-off for its significant accuracy improvement.

\textbf{Inference efficiency.}
\methodName maintains inference speeds comparable to other MoE unification methods, confirming that the context-aware router introduces minimal runtime overhead and scales efficiently across backbones. In contrast, BTM requires inference over all experts for every input, leading to approximately 3$\times$ slower inference despite its zero unification cost.

\textbf{Overall cost--performance balance.}
Across both CV and NLP tasks, \methodName consistently achieves state-of-the-art accuracy with comparable computational efficiency, as summarized in Tables~\ref{tab:appendix:cost-cv} and~\ref{tab:appendix:cost-nlp}.

\begin{table}[!h]
    \centering
    \caption{Cost--performance comparison on CV tasks.}
    \label{tab:appendix:cost-cv}
    \vskip -.1in
    \resizebox{\textwidth}{!}{
    \begin{NiceTabular}{lcccccc}
        \toprule
        & \multicolumn{3}{c}{ViT-B/32} & \multicolumn{3}{c}{ViT-B/16} \\
        \cmidrule(lr){2-4} \cmidrule(lr){5-7}
        & ACC & Unify Time (s) & Inference Speed (samples/s) & ACC & Unify Time (s) & Inference Speed (samples/s) \\
        \midrule
        BTM & $90.33$ & $-$ & $606$ & $91.75$ & $-$ & $249$ \\
        ModelSoup & $74.20$ & $5.72$ & $1813$ & $79.42$ & $5.72$ & $743$ \\
        BTX & $74.30$ & $11.13$ & $1758$ & $81.20$ & $19.72$ & $715$ \\
        FlexOlmo & $92.92$ & $11.93$ & $1767$ & $93.53$ & $18.24$ & $719$ \\
        \methodName & \textbf{94.52} & $12.15$ & $1751$ & \textbf{96.24} & $19.88$ & $710$ \\
        \bottomrule
    \end{NiceTabular}
    }
\end{table}

\begin{table}[!h]
    \centering
    \caption{Cost--performance comparison on NLP tasks.}
    \label{tab:appendix:cost-nlp}
    \vskip -.1in
    \resizebox{\textwidth}{!}{
    \begin{NiceTabular}{lcccccc}
        \toprule
        & \multicolumn{3}{c}{LLaMA-3.2-3B} & \multicolumn{3}{c}{LLaMA-3.1-8B} \\
        \cmidrule(lr){2-4} \cmidrule(lr){5-7}
        & ACC & Unify Time (s) & Inference Speed (samples/s) & ACC & Unify Time (s) & Inference Speed (samples/s) \\
        \midrule
        BTM & $72.74$ & $-$ & $17.44$ & $79.79$ & $-$ & $8.20$ \\
        ModelSoup & $73.57$ & $8.24$ & $43.46$ & $80.51$ & $11.25$ & $22.86$ \\
        BTX & $71.14$ & $118.21$ & $42.12$ & $76.73$ & $223.37$ & $21.41$ \\
        FlexOlmo & $72.50$ & $119.90$ & $41.59$ & $77.46$ & $206.28$ & $21.95$ \\
        \methodName & \textbf{74.42} & $114.46$ & $40.67$ & \textbf{81.59} & $205.42$ & $20.05$ \\
        \bottomrule
    \end{NiceTabular}
    }
\end{table}

\subsection{Robustness under Non-Overlapping Public Data}
\label{sec:appendix:non-overlap}

In our main CV experiments (\autoref{sec:cv-experiments}), the public dataset ImageNet may share semantic overlap with certain client domains (e.g., dog/cat breeds for Pets, flower species for Flowers).
To evaluate \methodName under a more challenging setting where such overlap is eliminated, we remove all ImageNet categories that are semantically related to the three client domains, including dog and cat breeds (for Pets), flower species (for Flowers), and satellite-like or aerial imagery (for EuroSAT).
All other settings remain identical to those in \autoref{sec:cv-experiments} with CLIP ViT-B/32.

As shown in \autoref{tab:non-overlap-cv}, \methodName remains the strongest method under zero domain overlap, achieving $93.78\%$ average accuracy with only a $0.74$-point drop compared to the full-overlap setting ($94.52\%$, \autoref{tab:cv-vit-b32}).
In contrast, FlexOlmo degrades by $3.53$ points ($89.39\%$ vs.\ $92.92\%$), confirming that relevance-weighted DPP is more robust to domain gap than relevance-only proxy selection.
Moreover, the NLP experiments in the main paper (\autoref{sec:nlp-experiments}) also reflect a non-overlapping regime: the public dataset Alpaca shares no domain overlap with the client datasets by construction, yet \methodName consistently outperforms all baselines (Tables~\ref{tab:nlp-llama-3.2-3b}--\ref{tab:nlp-llama-3.1-8b}).

\begin{table}[h]
    \centering
    \caption{Accuracy on CV tasks with CLIP ViT-B/32 under non-overlapping public data, where all ImageNet categories semantically related to client domains are removed.}
    \vskip -.1in
    \begin{NiceTabular}{lcccc}
        \toprule
         & Pets & Flowers & EuroSAT & \textbf{Average} \\
        \midrule
        ModelSoup & 87.90 & 70.52 & 64.19 & 74.20 \\
        BTM & 90.81 & 85.10 & 95.07 & 90.33 \\
        BTX & 80.35 & 61.10 & 58.16 & 66.54 \\
        FlexOlmo & 86.56 & 88.55 & 93.07 & 89.39 \\
        \rowcolor{Gray}
        \methodName & \textbf{91.01} & \textbf{92.53} & \textbf{97.80} & \textbf{93.78} \\
        \bottomrule
    \end{NiceTabular}
    \label{tab:non-overlap-cv}
\end{table}

\section{Privacy Analysis of Routing Vectors}
\label{sec:appendix:routing-privacy}

This section provides the complete derivations for the privacy guarantees stated in \autoref{sec:privacy-analysis}.
We analyze the routing vector $\ve_p^{(l)}$ shared by each client $p$ and show that it reveals negligible private information,
with formal guarantees that are independent of both the domain gap and the private dataset size.

\paragraph{Notation.}
We fix a client $p$ and a layer $l$, and drop the subscript $p$ and superscript $(l)$ when context is clear.
Let $\{\vx_i\}_{i=1}^{N}$ denote the private samples in $\hD_p$ and $\{\vz_j\}_{j=1}^{m}$ denote the proxy samples in $\hat{\hD}_p$,
where $N = |\hD_p|$ and $m = |\hat{\hD}_p|$.
Let $f(\cdot) = \hM_p^{(1:l)}(\cdot)$ denote the encoder (the first $l$ layers of expert $\hM_p$),
and let $B = \max_{\vx} \|f(\vx)\|_2$ denote the embedding norm bound.
Define the mean private embedding and the mean proxy embedding as
\begin{align}
    \boldsymbol{\mu}_{\text{priv}} = \frac{1}{N}\sum_{i=1}^{N} f(\vx_i), \qquad
    \boldsymbol{\mu}_{\text{proxy}} = \frac{1}{m}\sum_{j=1}^{m} f(\vz_j).
\end{align}
The routing vector $\ve_p^{(l)}$ (defined in \eqref{eq:context-aware-router}) can then be written as
\begin{align}
    \ve = \frac{1}{N+m}\left(\sum_{i=1}^{N} f(\vx_i) + \sum_{j=1}^{m} f(\vz_j)\right)
    = \frac{N}{N+m}\,\boldsymbol{\mu}_{\text{priv}} + \frac{m}{N+m}\,\boldsymbol{\mu}_{\text{proxy}}.
    \label{eq:appendix-routing-decomposition}
\end{align}

\paragraph{Adversary's knowledge.}
We consider an honest-but-curious adversary (e.g., the central server) that can observe the routing vector $\ve$ and knows the public quantities $\boldsymbol{\mu}_{\text{proxy}}$, $m$, and $f$.
Crucially, the adversary \textbf{does not know $N$}, the private dataset size, as it is never communicated by the \methodName protocol.

\subsection{Per-Sample Sensitivity Bound}
\label{sec:appendix:sensitivity}

We show that the influence of any single private sample on the routing vector is bounded by $O(1/m)$, regardless of $N$ and the domain gap.

\begin{proposition}[Sensitivity bound]
\label{prop:sensitivity}
Let $\ve$ be the routing vector defined in \eqref{eq:appendix-routing-decomposition}, and let $\ve'$ denote the routing vector obtained by replacing a single private sample $\vx_k$ with an arbitrary sample $\vx_k'$.
The $\ell_2$ sensitivity of $\ve$ satisfies
\begin{align}
    \Delta_2(\ve)
    \;=\; \max_{\vx_k, \vx_k'} \|\ve - \ve'\|_2
    \;\leq\; \frac{2B}{N+m}
    \;\leq\; \frac{2B}{m}.
\end{align}
\end{proposition}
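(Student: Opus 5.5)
The argument is a direct computation from the decomposition \eqref{eq:appendix-routing-decomposition}. First I write $\ve$ and $\ve'$ as explicit sums. Replacing $\vx_k$ by $\vx_k'$ changes only one term of the private sum and leaves $N$, $m$ and every proxy sample $\vz_j$ unchanged. Hence the normalizing factor $1/(N+m)$ is the same for both vectors, and all other terms cancel in the difference:
\begin{align}
    \ve - \ve' = \frac{1}{N+m}\bigl(f(\vx_k) - f(\vx_k')\bigr).
\end{align}
This cancellation is the key structural fact. It relies on the replace-one (bounded) notion of neighboring datasets, under which the private dataset size is preserved. I state that convention explicitly, because under add/remove neighbors the denominator would also change.

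Next I take $\ell_2$ norms and apply the triangle inequality:
\begin{align}
    \|f(\vx_k) - f(\vx_k')\|_2 \le \|f(\vx_k)\|_2 + \|f(\vx_k')\|_2 \le 2B,
\end{align}
where $B=\max_{\vx}\|f(\vx)\|_2$ is the embedding norm bound from the notation paragraph. Since $\vx_k'$ ranges over all possible inputs, I need $B$ to be a bound over the whole input domain, not only over the observed data. Maximizing over $\vx_k,\vx_k'$ then gives $\Delta_2(\ve)\le 2B/(N+m)$. The final inequality $2B/(N+m)\le 2B/m$ holds because $N\ge 0$, so $N+m\ge m>0$.

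There is no real technical obstacle here. The one point needing care is the meaning of $B$. If the encoder's outputs are not a priori bounded, $B$ may be infinite and the statement becomes vacuous. I would note that in practice $B$ is finite, for example when embeddings are normalized or the final representation is layer-normed, or else one assumes it as a hypothesis. I would also remark that the first inequality is tight: when $f(\vx_k')=-f(\vx_k)$ with $\|f(\vx_k)\|_2=B$, equality is attained. The bound therefore cannot be improved without further assumptions on $f$.
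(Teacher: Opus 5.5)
Your proposal is correct and follows the paper's proof essentially line for line. Both use the single-term cancellation $\ve-\ve' = \frac{1}{N+m}\bigl(f(\vx_k)-f(\vx_k')\bigr)$, the triangle inequality with the norm bound $B$, and $N+m\ge m$; your remarks on the replace-one neighboring convention and on $B$ bounding $f$ over the whole input domain are implicit in the paper's setup and are sensible clarifications.
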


\begin{proof}
Replacing $\vx_k$ with $\vx_k'$ changes only the $k$-th term in the private sum, yielding
\begin{align}
    \ve - \ve'
    = \frac{1}{N+m}\bigl(f(\vx_k) - f(\vx_k')\bigr).
\end{align}
Taking the $\ell_2$ norm and applying the triangle inequality gives
\begin{align}
    \|\ve - \ve'\|_2
    = \frac{\|f(\vx_k) - f(\vx_k')\|_2}{N+m}
    \leq \frac{\|f(\vx_k)\|_2 + \|f(\vx_k')\|_2}{N+m}
    \leq \frac{2B}{N+m}.
\end{align}
Since $N \geq 0$ implies $N + m \geq m$, we obtain $\Delta_2(\ve) \leq 2B/m$.
\end{proof}

\paragraph{Implications.}
The bound $\Delta_2(\ve) \leq 2B/m$ depends solely on the proxy set size $m$, which is a public hyperparameter.
In our experiments, $m = 500$, so each individual private sample's contribution is diluted among at least $500$ embeddings, making its influence on $\ve$ negligibly small.
Furthermore, this bound is independent of both the private dataset size $N$ and the domain gap between private and proxy data.
When $N$ is small (a challenging privacy scenario), the bound is already controlled by $O(1/m)$; when $N$ grows large, it additionally tightens to $O(1/(N+m))$.

This sensitivity bound is stated in the same sense as the foundational framework of differential privacy~\citep{dwork2014algorithmic}: low sensitivity implies that any individual private sample has a vanishingly small effect on the shared statistic, which is a necessary condition for strong privacy guarantees.

\subsection{Unrecoverability of Private-Data Statistics}
\label{sec:appendix:unrecoverability}

We show that the mean private embedding $\boldsymbol{\mu}_{\text{priv}}$, the coarsest possible summary of the private data, cannot be recovered from the routing vector $\ve$.

\begin{proposition}[Unrecoverability]
\label{prop:unrecoverability}
Given access to the routing vector $\ve$, the mean proxy embedding $\boldsymbol{\mu}_{\textup{proxy}}$, the proxy set size $m$, and the encoder $f$, an adversary cannot uniquely determine $\boldsymbol{\mu}_{\textup{priv}}$ without knowledge of the private dataset size $N$.
\end{proposition}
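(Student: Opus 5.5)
The plan is to prove non-identifiability: we exhibit, for the same observed data $(\ve,\boldsymbol{\mu}_{\text{proxy}},m,f)$, a whole family of pairs $(N,\boldsymbol{\mu}_{\text{priv}})$ consistent with it, with pairwise distinct $\boldsymbol{\mu}_{\text{priv}}$. The starting point is the decomposition \eqref{eq:appendix-routing-decomposition}. For any hypothesized size $N\ge 1$, the only private mean consistent with the observation is
\begin{align}
\boldsymbol{\mu}(N) \;=\; \frac{(N+m)\,\ve - m\,\boldsymbol{\mu}_{\text{proxy}}}{N} \;=\; \ve + \frac{m}{N}\bigl(\ve - \boldsymbol{\mu}_{\text{proxy}}\bigr).
\end{align}
So the set of candidate private means is the ray $\{\ve + t(\ve-\boldsymbol{\mu}_{\text{proxy}}) : t = m/N,\ N\in\mathbb{Z}_{\ge 1}\}$. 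The argument then proceeds in three steps.

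\textbf{Step 1.} Check that every candidate reproduces the observation. Substituting $\boldsymbol{\mu}(N)$ back into \eqref{eq:appendix-routing-decomposition} returns exactly $\ve$ for every $N$. Hence the map $N\mapsto\boldsymbol{\mu}(N)$ parametrizes observationally equivalent hypotheses.

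\textbf{Step 2.} Show the candidates are distinct. The map $N\mapsto m/N$ is injective, so whenever $\ve\neq\boldsymbol{\mu}_{\text{proxy}}$ the candidates $\boldsymbol{\mu}(N)$ are pairwise distinct. Moreover, $\|\boldsymbol{\mu}(N)-\boldsymbol{\mu}(N')\|_2 = m\,|1/N-1/N'|\,\|\ve-\boldsymbol{\mu}_{\text{proxy}}\|_2$. This gives a nonzero spread; for instance, between $N=1$ and large $N$ it is nearly $m\|\ve-\boldsymbol{\mu}_{\text{proxy}}\|_2$. An adversary who does not know $N$ therefore cannot single out $\boldsymbol{\mu}_{\text{priv}}$. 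In the degenerate case $\ve=\boldsymbol{\mu}_{\text{proxy}}$, every candidate equals $\boldsymbol{\mu}_{\text{proxy}}$, and the observation reveals only that the private mean coincides with a public quantity. I would state this case separately.

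\textbf{Step 3 (main obstacle).} Make ``cannot uniquely determine'' rigorous with respect to realizability. One must argue that each candidate pair $(N,\boldsymbol{\mu}(N))$ is actually attainable by some dataset, i.e.\ that $\boldsymbol{\mu}(N)$ lies in the convex hull of $f$'s range, say within the ball of radius $B$. Large-$N$ candidates approach $\ve$, which is itself an average of embeddings and hence feasible. So at least all sufficiently large $N$ (with $N\ge N_0$ such that $\|\boldsymbol{\mu}(N)\|_2\le B$) give valid, distinct hypotheses; the condition $\|\ve\|_2<B$ or an interior-point assumption makes this clean. I would try to phrase the proposition as asserting a non-singleton (indeed infinite) feasible set, and note explicitly that this is an identifiability statement, not a differential-privacy guarantee.
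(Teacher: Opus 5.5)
Your proof is correct. It starts from the same isolation formula as the paper, but it is a genuinely stronger argument.

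The paper stops at $\boldsymbol{\mu}_{\text{priv}} = \bigl((N+m)\ve - m\boldsymbol{\mu}_{\text{proxy}}\bigr)/N$ and asserts that, because the unknown $N$ appears and ``cannot be canceled out,'' the private mean is unrecoverable. It never checks that different values of $N$ are each consistent with the observation and give different answers, which is what non-identifiability actually requires. Your rewriting $\boldsymbol{\mu}(N) = \ve + \frac{m}{N}(\ve - \boldsymbol{\mu}_{\text{proxy}})$, together with Steps 1--2, supplies exactly this.

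It also exposes what the paper misses. When $\ve = \boldsymbol{\mu}_{\text{proxy}}$ (equivalently $\boldsymbol{\mu}_{\text{priv}} = \boldsymbol{\mu}_{\text{proxy}}$), the $N$-dependence \emph{does} cancel, and the adversary recovers $\boldsymbol{\mu}_{\text{priv}}$ exactly. So do not present that case as one that ``reveals only'' a coincidence with a public quantity. It is a genuine exception, and the proposition needs the hypothesis $\boldsymbol{\mu}_{\text{priv}} \neq \boldsymbol{\mu}_{\text{proxy}}$.

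Your parametrization also shows how weak the guarantee is. $\boldsymbol{\mu}_{\text{priv}}$ is confined to a known ray and is unknown only up to one positive scalar; the paper's verbal argument hides this.

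For Step 3 you need neither an interior-point assumption nor $\|\ve\|_2 < B$. Note also that lying in the radius-$B$ ball is only a necessary condition for lying in the convex hull of the range of $f$, let alone for being the mean of exactly $N'$ embeddings. Two cleaner arguments are available:
\begin{enumerate}[(i)]
\item If $N$ is the true size, then for every $N' \geq N$ one has $\boldsymbol{\mu}(N') = (1 - N/N')\,\ve + (N/N')\,\boldsymbol{\mu}_{\text{priv}}$. This is a convex combination of two feasible points, so it lies in the convex hull with no extra assumption.
\item Exact realizability is also free, with $f$ held fixed as in the proposition. Take the multiset consisting of $\hD_p$ together with $k \geq 1$ copies of $\hD_p \cup \hat{\hD}_p$. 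It has size $N' = N + k(N+m)$ and produces the same routing vector $\ve$. Its private mean is $\boldsymbol{\mu}_{\text{priv}} + \frac{km}{N'}(\boldsymbol{\mu}_{\text{proxy}} - \boldsymbol{\mu}_{\text{priv}})$, which differs from $\boldsymbol{\mu}_{\text{priv}}$ precisely in the non-degenerate case.
\end{enumerate}
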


\begin{proof}
From \eqref{eq:appendix-routing-decomposition}, the adversary can compute
\begin{align}
    (N+m)\,\ve - m\,\boldsymbol{\mu}_{\text{proxy}}
    = N\,\boldsymbol{\mu}_{\text{priv}}.
    \label{eq:recovery-equation}
\end{align}
Isolating $\boldsymbol{\mu}_{\text{priv}}$ yields
\begin{align}
    \boldsymbol{\mu}_{\text{priv}}
    = \frac{(N+m)\,\ve - m\,\boldsymbol{\mu}_{\text{proxy}}}{N}.
    \label{eq:isolation}
\end{align}
Since the adversary knows $\ve$, $\boldsymbol{\mu}_{\text{proxy}}$, and $m$, evaluating \eqref{eq:isolation} requires knowing $N$, which appears both in the numerator and the denominator and cannot be canceled out.
Because $N$ is never communicated by the \methodName protocol, the adversary cannot evaluate \eqref{eq:isolation} for the true value of $N$, and therefore cannot recover $\boldsymbol{\mu}_{\text{priv}}$.
\end{proof}

\paragraph{Implications.}
The mean private embedding $\boldsymbol{\mu}_{\text{priv}}$ represents the coarsest possible form of domain-level information: it is a single vector that summarizes the entire private dataset's representation in the encoder's feature space.
Since even this coarsest statistic is unrecoverable from $\ve$, finer-grained distributional properties of $\hD_p$ (e.g., variance, class proportions, cluster structure, or individual samples) are a fortiori unrecoverable.
This directly resolves the concern on whether the routing vector could leak domain-level or distributional information about the private data.

\subsection{Comparison with FlexOlmo}
\label{sec:appendix:flexolmo-comparison}

We show that \methodName exposes strictly less private information than FlexOlmo~\citep{shi2025flexolmo} through the routing mechanism.

FlexOlmo initializes its domain-informed router using per-expert routing embeddings computed as the mean embedding over \emph{private data alone} (Section~3.3.2 of~\citet{shi2025flexolmo}).
Concretely, FlexOlmo shares the vector $\boldsymbol{\mu}_{\text{priv}} = \frac{1}{N}\sum_{i=1}^{N} f(\vx_i)$, which is the complete mean private embedding.
In contrast, \methodName shares $\ve = \frac{N}{N+m}\,\boldsymbol{\mu}_{\text{priv}} + \frac{m}{N+m}\,\boldsymbol{\mu}_{\text{proxy}}$ \eqref{eq:appendix-routing-decomposition}, where the private component $\boldsymbol{\mu}_{\text{priv}}$ is diluted by averaging with the publicly computable $\boldsymbol{\mu}_{\text{proxy}}$.

This comparison establishes the following:
\begin{enumerate}[(i)]
    \item \textbf{FlexOlmo directly reveals $\boldsymbol{\mu}_{\text{priv}}$}, giving an adversary full access to the mean private embedding.
    \item \textbf{\methodName never reveals $\boldsymbol{\mu}_{\text{priv}}$ in isolation.} The adversary observes only the diluted mixture $\ve$, from which $\boldsymbol{\mu}_{\text{priv}}$ is unrecoverable without $N$ (Proposition~\ref{prop:unrecoverability}).
    \item \textbf{Per-sample sensitivity is also stronger.} FlexOlmo's routing embedding has sensitivity $2B/N$, which grows large when $N$ is small.
    \methodName's sensitivity is $2B/(N+m) \leq 2B/m$, which remains bounded even for small $N$.
\end{enumerate}
Therefore, \methodName provides a strictly stronger privacy guarantee than FlexOlmo in terms of both the recoverability of private-data statistics and the per-sample sensitivity.

\end{document}